\newcommand{\R}{\mathbb{R}}
\newcommand{\E}{\mathbb{E}}
\newcommand{\Var}{\mathbb{V}ar}
\DeclareMathOperator\KL{KL}
\DeclareMathOperator\ENT{H}
\DeclareMathOperator\Tr{Tr}
\newcommand{\vtheta}{\mbox{\boldmath $\theta$}}
\newcommand{\vmu}{\mbox{\boldmath $\mu$}}
\newcommand{\vpi}{\mbox{\boldmath $\pi$}}
\newcommand{\vXi}{\mathbf \Xi}
\newcommand{\vSigma}{\mathbf \Sigma}
\newcommand{\vf}{\mathbf f}
\newcommand{\vg}{\mathbf g}
\newcommand{\vs}{\mathbf s}
\newcommand{\vu}{\mathbf u}
\newcommand{\vx}{\mathbf x}
\newcommand{\vI}{\mathbf I}
\newcommand{\vR}{\mathbf R}
\newcommand{\cA}{\mathcal{A}}
\newcommand{\cB}{\mathcal{B}}
\newcommand{\cE}{\mathcal{E}}
\newcommand{\cL}{\mathcal{L}}
\newcommand{\cN}{\mathcal{N}}
\newcommand{\cS}{\mathcal{S}}
\newcommand{\cV}{\mathcal{V}}
\newtheorem{theorem}{Theorem}[subsection]
\newtheorem{lemma}{Lemma}[subsection]
\newtheorem{proposition}{Proposition}[subsection]
\title{Deep Value Model Predictive Control}
\author{
$^*$\textbf{Farbod Farshidian}$^{1}$, $^*$\textbf{David Hoeller}$^{1,2}$, \textbf{Marco Hutter}$^{1}$ \\
 $^1$ \textrm{Robotic Systems Lab, ETH Zurich}\\
  $^2$ \textrm{NVIDIA}
   \\
  \texttt{\{farbodf, dhoeller, mahutter\}@ethz.ch} \\
}
\begin{document}

\maketitle


\begin{abstract}
In this paper, we introduce an actor-critic algorithm called Deep Value Model Predictive Control (DMPC), which combines model-based trajectory optimization with value function estimation. The DMPC actor is a Model Predictive Control (MPC) optimizer with an objective function defined in terms of a value function estimated by the critic. We show that our MPC actor is an importance sampler, which minimizes an upper bound of the cross-entropy to the state distribution of the optimal sampling policy. In our experiments with a Ballbot system, we show that our algorithm can work with sparse and binary reward signals to efficiently solve obstacle avoidance and target reaching tasks. Compared to previous work, we show that including the value function in the running cost of the trajectory optimizer speeds up the convergence. We also discuss the necessary strategies to robustify the algorithm in practice.
\footnotetext{$^*$ \textrm{Both authors contributed equally to this work (alphabetical ordering)}}
\end{abstract}

\keywords{Reinforcement Learning, Value Function Learning, Trajectory Optimization, Model Predictive Control} 
\section{Introduction}

Learning in environments with sparse reward/cost functions remains a challenging problem for Reinforcement Learning (RL). As the exploration strategy employed plays a vital role in such scenarios, the agent has to find and leverage small sets of informative samples maximally. Often, an agent can be provided with prior knowledge about the environment in the form of an incomplete model, such as the agent's dynamics. The sparsity of the reward and potential non-differentiability rule out the possibility of using Trajectory Optimization (TO). Furthermore, the sparsity of the cost function can be problematic even for RL methods that do not use structured and directed exploration policies, e.g., $\varepsilon$-greedy techniques. Thus, the goal of this work is to combine model-based and sample-based approaches in order to exploit the knowledge of the system dynamics while effectively exploring the environment.

Model Predictive Control (MPC) as a TO technique has proven to be a powerful tool in many robotic tasks \cite{Alexis2011, Farshidian2017MPC, Koenemann2015}. While this model-based approach truncates the time horizon of the task, it continually shifts the shortened horizon forward and optimizes the state-input trajectory based on new state measurements. The main disadvantage of the MPC approach is its relatively high computational cost. As a consequence, the optimization time horizon is often kept short (e.g., in the order of seconds), which in turn prevents MPC from finding temporally global solutions. Furthermore, MPC heavily relies on the differentiability of the formulation and has a hard time dealing with sparse and non-continuous reward/cost signals. 

On the other hand, RL solves the same problem by exploring and collecting information about the environment and making decisions based on samples. The RL agent has to learn about its environment, including its dynamics from scratch via trial and error. Nevertheless, deep reinforcement learning has displayed remarkable performance in long-horizon tasks with sparse rewards \cite{silver2017AlphaGo}, even in the continuous domain \cite{lillicrap2016RL,hwangbo2019RL}. The main drawbacks of RL are that it still requires enormous amounts of data and suffers from the exploration-exploitation dilemma \cite{sutton1998RL}. 

In this work, we derive an actor-critic approach where the critic is a value function learner and the actor an MPC strategy. Leveraging the generality of value functions, we propose to extend past work such as \cite{lowrey2018plan}, resulting in an algorithm called Deep Value Model Predictive Control (DMPC). The DMPC algorithm uses an MPC policy to interact with the environment and collect informative samples to update its approximation of the value function. The running cost and the heuristic function (also known as the terminal cost) of MPC are defined in terms of the value function estimated by the critic.

We also provide an in-depth analysis of the bilateral effect of this actor-critic scheme. We show that the MPC actor is an importance sampler that minimizes an upper bound of the cross-entropy to the state trajectory distribution of the optimal sampling policy. Using the value function to define the MPC cost enables us to transform an initially stochastic task into a deterministic optimal control problem. We further empirically validate that defining a running cost in addition to the heuristic function accelerates the convergence of the value function, which makes the DMPC algorithm suitable for tasks with sparse reward function.   

\section{Problem Formulation}
\label{sec:problem_formulation}

In this section, we provide more details on the control problem we solve. We consider the problem of sequential decision making in which an agent interacts with an environment to minimize the cumulative cost from the current time and onwards. We formulate this problem as a discounted Markov Decision Processes (MDP)~\cite{sutton1998RL} with a stochastic termination time.  

The agent interacts with an environment with state $\vs(t) \in \cS$ and performs actions $\mathbf{u}_\pi(t) \in \cA$ according to a time and state-dependent control policy $\vpi(t, \vs)$. For the sake of brevity, the dependency of $\vs$ and $\vu$ on $t$ is dropped when clear from the context. We consider stochastic dynamical systems where the state evolves according to
\begin{align}
\label{eq:stochastic_process}
    & d \vs = \Big( \vf(t, \vs) + \vg(t,\vs) \vu_{\pi} \Big) dt + \vg(t,\vs) d\cB(t), \quad \vs(t_0) = \vs_0 \\
    & \vu_{\pi} = \pi(t, \vs)
\end{align}
where $\cB(t)$ is a Brownian motion with $\Var[d\cB(t)] = \vSigma dt$ and $\vs_0$ is the initial state at time $t_0$.

The agent selects actions according to a policy $\vpi$ to minimize the discounted expected return for a set of initial states $\vs_0 \in \cS_0$. The discounted expected return is defined as 
\begin{align}
\label{eq:stochastic_cost}
    & V^{\pi}(t_0, \vs_0) =  \mathbb{E}_{\pi} \big[ C^{\pi}(t_0,\vs_0) \big] ,
    \\
\label{eq:path_cost}
    & C^{\pi}(t_0, \vs_0) =  \gamma^{T-t_0} q_f(\vs(T)) + \int_{t_0}^{T} 
    \Big( \gamma^{t-t_0} q(t,\vs) dt + \frac 12 \vu_{\pi}^\top \vR \, \vu_{\pi}    
     dt + \vu_{\pi}^\top \vR d\cB \, \Big),
\end{align}
where $T \in [0,+\infty)$ is the termination time of the problem, $\gamma \in [0,1)$ the discount factor, ${ q(t,\vs): \R^+ \times \cS \to \R }$ the running cost, $q_f(\vs): \cS \to \R$ the termination cost, $\vR=\vR^\top$ a positive definite matrix regularizing the control inputs. $C^{\pi}(t_0,\vs_0)$ is a stochastic variable called path cost. $V^{\pi}(t_0,\vs_0)$ is computed by averaging over path costs generated from rollouts of the stochastic process in Equation~\eqref{eq:stochastic_process} given a policy $\vpi$.

\section{Preliminaries}
\label{sec:preliminaries}

\paragraph{Path Integral Optimal Control}
\label{sec:path_integral}

The optimal policy $\vpi^*(t, \vs)$ and the value function $V^*(t, \vs)$ it induces can be computed according to
\begin{align}
\label{eq:optimal_policy}
    \vpi^*(t, \vs) =& \arg\min_{\pi} V^{\pi}(t, \vs) ,
    \\
\label{eq:optimal_value}
    V^*(t, \vs) =& \min_{\pi} V^{\pi}(t, \vs). 
\end{align}

The optimal value function, $V^*$, satisfies the stochastic Hamilton-Jacobi-Bellman (HJB) equation. 
The derivation of the HJB equation is based on the principles of dynamic programming. This formulation is quite general, but unfortunately, computing an analytical solution is only possible for some special cases such as LQ regulators. 
The original work of Kappen~\cite{kappen2007, kappen16adaptive} studies one of these cases in which the non-linear HJB equation can be transformed into a linear equation by enforcing a constraint on the input regularization matrix and the covariance of the noise:  $\vSigma \, \vR = \lambda \vI$ where $\lambda \in \R^+$. As a result of this linearity, the backward computation of the HJB equation can be replaced by a forward diffusion process that can be computed by stochastic integration. Therefore, the stochastic optimal control solution can be estimated with a Monte Carlo sampling method, resulting in the path integral control formulation
\begin{align}
\label{eq:optimal_value_solution}
    & V^*(t, \vs) = -\lambda \log \Psi^* (t, \vs)
    \\
\label{eq:optimal_distribution}
    & p^* (\rho) = \frac{1}{\Psi^* (t, \vs)} p^\pi (\rho) e^{- \frac{1}{\lambda} C^{\pi}(t,\vs)}
    \\
\label{eq:desirability}
    & \Psi^* (t, \vs) = \E_\pi [e^{- \frac{1}{\lambda} C^{\pi}(t,\vs)}],
\end{align}
where $\rho$ is a state trajectory in the time interval $[t,T]$, $p^\pi(\rho)$ its corresponding probability distribution under policy $\vpi$, and $\Psi^*(t, \vs)$ is called the desirability function for the optimal policy. 

Equations \eqref{eq:optimal_value_solution} and \eqref{eq:optimal_distribution} give an explicit expression for the optimal value function and the optimal distribution of state trajectories in terms of the expectation of the cumulative cost over trajectories. The quality of this estimation depends on how well we can estimate the desirability function. In general, for problems with sparse cost functions, using the path integral approach is challenging and requires the use of more advanced approaches to extract samples.   

\paragraph{Cross Entropy} 

Efficient estimation of the solution to the path integral control problem critically depends on the quality of the collected samples. 
As shown by \cite{thijssen2015}, the best sampling strategy for estimating the optimal solution is, ironically, the optimal policy itself\footnote{Thijssen and Kappen \cite{thijssen2015} show that the variance of the desirability function estimation approaches to zero when the optimal policy is used as the sampling distribution ($\pi=\pi^*$)}. 
Based on this observation, open-loop approaches such as \cite{theodorou2010generalized, williams2016aggressive} use the latest estimate of the optimal policy to generate trajectory samples. However, they do not provide a systematic approach to control the variance of the sampling policy, which could lead to an inefficient sampling method, in particular, if the underlying system is unstable.

Using a policy that also controls the variance of the sampled state trajectories is required to have a state-dependent feedback \cite{kappen16adaptive}. However, finding such a feedback policy is equivalent to estimating the optimal sampling policy over the entire state space for each time. The challenge of this approach lies in the design and the update scheme of such a policy.

A common approach to tackle this issue is the cross-entropy method, which is an adaptive importance sampling scheme to estimate the probability of rare events. In this approach, the optimal sampler is estimated by a sequence of more tractable distributions, which are iteratively improved based on the collected samples. For that matter, the cross-entropy between the state probability distributions of the optimal ($p^*$) and the current sampler ($p$) is minimized, where the cross-entropy is defined as 
\begin{align*}
    \ENT(p^*, p) = \ENT(p^*) + \KL(p^* \Vert p).
\end{align*}
It follows that minimizing the cross-entropy with respect to $p$, is equivalent to minimizing the KL divergence between $p^*$ and $p$. This method has been proven to be useful for path-integral based problems. For example, Kappen and Ruiz \cite{kappen16adaptive} have employed a parameterized distribution to estimate the optimal sampler. Similarly, we here use a cross-entropy approach to estimate the optimal sampler. However, instead of a parameterized distribution family, we use an MPC strategy to estimate the optimal sampler implicitly.

\paragraph{Model Predictive Control} 

The MPC strategy replaces the infinite-horizon optimization problem by a sequence of finite-horizon optimal control problems with prediction horizon $H$, which are numerically more tractable. At each control step, MPC solves the following optimal control problem from the current state and time. Then, only the first segment of the optimized sequence is used until the controller receives a new state and repeats the procedure.  
\begin{equation}
\label{eq:mpc}
\begin{aligned}
& \underset{\pi}{\text{minimize}}
& & \gamma^{H} \phi(\vx (t+H)) + \int_{t}^{t+H}  \Big(
        \gamma^{\tau-t} l(\tau,\vx) + \frac 12 \vu_{\pi}(\tau)^\top \vR \, \vu_{\pi}(\tau) 
    \Big) d\tau  \\
& \text{subject to}
& & d \vx(\tau) = \Big( \vf(\tau, \vx) + \vg(\tau,\vx) \vu_{\pi}(\tau) \Big) d\tau, \quad \vx(t) = \vs \\
& & & \vu_{\pi}(\tau) = \pi(\tau, \vx(\tau)).
\end{aligned}
\end{equation}
Here, $l(\tau,\vx): \R^+ \times \cS \to \R $ is the running cost, $\phi(\vx): \cS \to \R$ the heuristic function which accounts for the truncated-time accumulated cost. We denote the state by $\vx$ to clearly distinguish between the actor's computation and the state of the MDP (indicated by $\vs$). Note that the system dynamics are deterministic, and therefore the optimal control problem is formulated deterministically.
\section{Deep Value Model Predictive Control}
\label{sec:dmpc}

Our DMPC algorithm is an actor-critic approach where a value function is used to provide a measure of how well an MPC actor performs. We assume that a model of the robot dynamics is available to an agent, and this internal nominal model is accurate. In the following, we briefly discuss the structure of the critic and the actor.
%
\noindent
\begin{figure}[!t]
\begin{minipage}{0.5\textwidth}
\centering
\begin{algorithm}[H]
    \caption{Deep Value MPC}\label{alg:dmpc}
    \begin{algorithmic}[1]
        \State \textbf{given} An initial estimate $\hat{V}$ of $V^*$
        \MRepeat
    	\State Sample an initial state $\vx$
    	\State $t := 0$
        \While{$t < T $} 
          \State Compute $\vu_{\hat{\pi}}(t)$ according to Eq.~\eqref{eq:equivalent_deterministic} 
          \State Execute $\vu_{\hat{\pi}}(t)$, obtain a cost $c(t)$
          \State Add the transition tuple to a buffer
          \State $t = t + \delta t $
        \EndWhile
        \State Using the buffer, update $\hat{V}$ (Eq.~\eqref{eq:bellman})
        \EndRepeat
    \end{algorithmic}
\end{algorithm}
\end{minipage}
\hspace{0.5cm}
\begin{minipage}{0.45\textwidth}
\centering
\label{fig:framework}
\includegraphics[width=\textwidth]{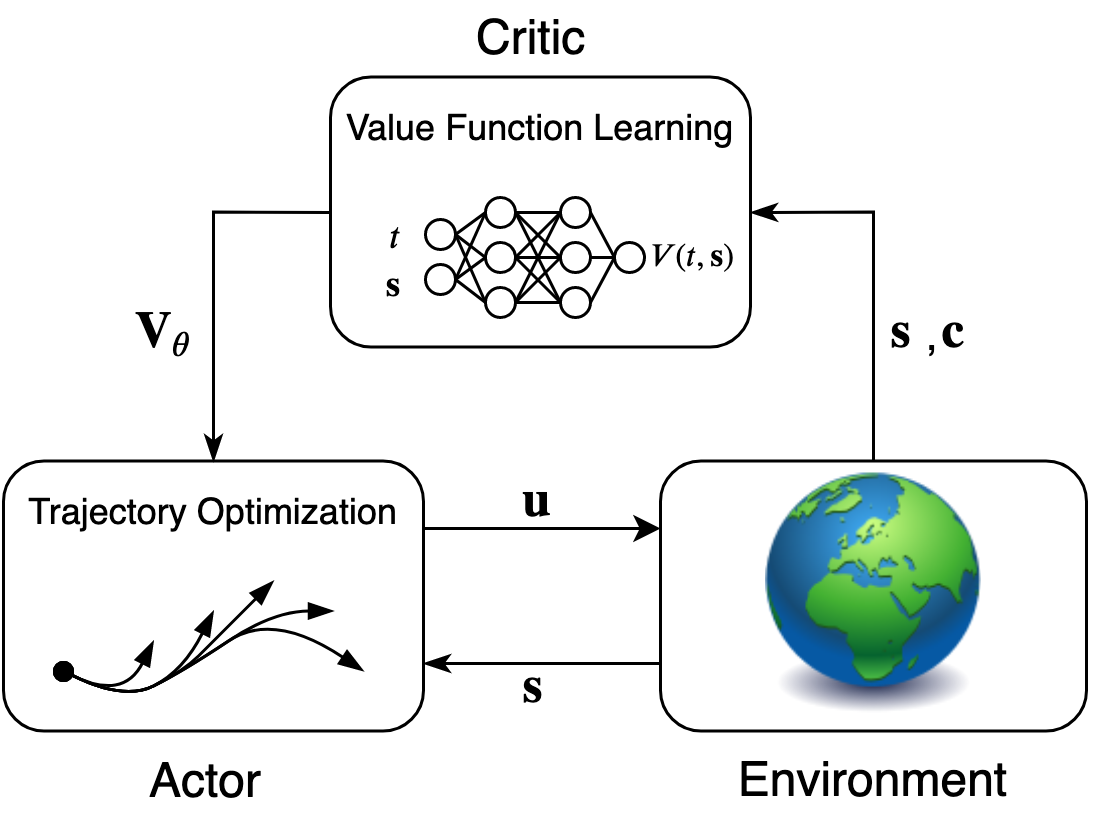}
\captionof{figure}{The DMPC pipeline. The actor is a trajectory optimizer that depends on the value learned by the critic.} 
\end{minipage}%
\end{figure}%

\paragraph{DMPC Critic}
The goal of the critic is to asses the performance of the actor by means of the value function. When the actor interacts with the environment (i.e. rolls out the policy), it collects transition tuples \cite{sutton1998RL}. 
Using these samples, the critic is able to compute the empirical value along each trajectory and thus refine its estimate of the actual value function. The value function is represented by a function approximator parametrized by $\vtheta$. Computing the one step Bellman target for a sample taken at time $t$, i.e.,
\begin{align}
y(t) = c(t) + \gamma \hat{V}(t + \delta t, \vs(t + \delta t) | \boldsymbol{\theta})),
\end{align}
where $\delta t$ is the timestep and $c(t)$ a cost reflecting the performance on a given task, the critic can refine the value function estimate $\hat{V}$ by solving 
\begin{align} \label{eq:bellman}
\underset{\vtheta}{\text{minimize}} \thinspace \E \left[ \left( y(t) - \hat{V}(t, \vs(t) | \boldsymbol{\theta}) \right) ^2 \right] .
\end{align}
Similar to \cite{lillicrap2016RL}, while the target $y$ depends on $\boldsymbol{\theta}$, we neglect this dependency during the optimization.

\paragraph{DMPC Actor}
The DMPC actor is a trajectory optimizer as defined in equation~\eqref{eq:mpc}, where the heuristic function and the running cost are defined as
\begin{align}
    &\phi(\vx(t+H)) = \hat{V}(t+H,\vx(t+H)) ,
    \\
    &l(\tau, \vx) = 
        - \partial_t \hat{V}(\tau,\vx) 
        - \vf(\tau,\vx)^\top \partial_{\vx} \hat{V}(\tau,\vx)
        + \frac 12 \partial_{\vx} \hat{V}(\tau,\vx)^\top \mathbf{\Xi}(\tau, \vx) \, \partial_{\vx} \hat{V}(\tau,\vx) ,
\end{align}
with $\mathbf{\Xi}(\tau, \vx) = \vg(\tau,\vx) \vR^{-1} \, \vg(\tau,\vx)^\top$.

This MPC formulation replaces the original $T$-horizon problem by a sequence of finite-horizon optimal control sub-problems with prediction horizon $H$ ($H < T$). The dynamic programming principle ensures that if the sub-problems are formulated with the exact value function as a heuristic function, then the MPC method solves the problem globally. However, in practice, the actor only has an approximation of the optimal value function. 
While the approximation error degrades the performance of greedy policies, the MPC benefits from the look-ahead mechanism. It can be shown that if the horizon of the actor is long enough, the effect of the value function error is mitigated \cite{lowrey2018plan}. Therefore, MPC strategies using an approximate value function for the heuristic function, in general, outperform methods that are based on the instantaneous minimization of the value function estimate. 

Another significant advantage of using such an approach is that we can compute a temporally global optimal sequence of actions using the temporally local solution of the MPC. This further allows us to tune the MPC time horizon based on the available computational resource and use the value function to account for the truncated accumulated cost.  

\paragraph{Actor-Critic Interaction}
The procedure for the DMPC algorithm directly follows: From an estimate $\hat{V}$ of $V^*$ given by the critic, the MPC actor computes the policy $\vpi$ by solving Equation~\eqref{eq:mpc}. The critic improves the estimate $\hat{V}$ based on the collected samples using Equation~\eqref{eq:bellman}. This results in an off-policy actor-critic method since instead of assessing the value of the current policy, the actor directly estimates the optimal value function. The main steps of DMPC are outlined in Algorithm~\ref{alg:dmpc}. 

The MPC policy is an importance sampler for the value function learner. To motivate this, consider a problem with a sparse cost. Since the MPC actor predicts the state evolution, it can foresee less costly areas of the state space and propagate this information back to the current time. As a result, it can coordinate the action sequence to steer the agent towards future rewarding regions. In the next section, we provide more formal insights and show that MPC minimizes an upper bound of the cross-entropy between the state trajectory distribution of the optimal policy and the current policy. 

By repeating the actor-critic interactions, the estimation of the value function gets closer to the optimal one, which, in turn, improves the MPC performance by enhancing the estimate of the truncated cumulative cost.

\section{DMPC Properties}
In this section, we analyze the properties of the DMPC algorithm. We start with a setup where the MPC actor uses the learned value function only for the heuristic function. There, we study the effect of this actor-critic setup on the convergence of the value function learning. Next, we propose a setting where a running cost is defined based on the value function. We discuss that such a cost extends the application of the actor-critic method to problems with a sparse and non-differentiable cost. Moreover, it allows us to use a deterministic MPC solver instead of a stochastic one.

\subsection{Role of the Heuristic Function}
As a first step, we focus on the case where we only use the learned value function as the heuristic function of the MPC actor. This setup is similar to the approach proposed by~\cite{lowrey2018plan}. We here build upon their result and provide a more in-depth analysis of the impact of the MPC actor on the acceleration of the value function convergence. 

As discussed, the convergence of the value function critically depends on the sampling distribution and, the optimal sampler for the stochastic optimal control problem defined in \eqref{eq:stochastic_process}-\eqref{eq:path_cost} is the optimal control policy $\pi^*$ \cite{thijssen2015}. However, we cannot compute the optimal distribution during the learning process, and instead, we wish to find the near-optimal control policy $\pi$ such that $p^{\pi}$ is close to $p^*$. As discussed in the introduction of the cross-entropy method, we wish to minimize 
\begin{equation}
    \KL(p^* \Vert p^\pi) = -\E_{p^{*}} \left[ \log \left( \frac{p^{\pi}}{p^{*}} \right) \right].
\end{equation}
\begin{theorem} 
\label{thm:kl_upper_bound}
Assuming that from a state $\vs(t)$ the policy remains in the vicinity of the optimal policy up to time $T$, i.e., $\int_t^T \frac 12 \lVert \vu_*(\tau)-\vu_\pi(\tau) \rVert_{\vR}^2 d\tau \leq \cE $, an upper bound for the forward KL divergence between the state trajectory distributions of the optimal policy and policy $\pi$ is given by
%
\begin{align}
    \KL(p^* \Vert p^\pi) 
    \leq 
    \KL(p^\pi \Vert p^*) 
    + \Big(
    \cE \,
    \KL(p^\pi \Vert p^*)  \,
    \Var \Big[ \frac{e^{- \frac{1}{\lambda} C^{\pi}(t,\vs)}}{\Psi (t, \vs)} \Big] 
    \Big)^{\frac 12}.
\end{align}
\end{theorem}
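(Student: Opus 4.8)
The plan is to push the whole statement through the single importance weight
$$w(\rho)\;:=\;\frac{dp^*}{dp^\pi}(\rho)\;=\;\frac{e^{-\frac1\lambda C^\pi(t,\vs)}}{\Psi(t,\vs)},$$
which is exactly the random variable whose variance appears on the right-hand side: by \eqref{eq:optimal_distribution} it is a genuine likelihood ratio, so $\E_{p^\pi}[w]=1$, and by \eqref{eq:optimal_value_solution} it has the closed form $\log w=\tfrac1\lambda\big(V^*(t,\vs)-C^\pi(t,\vs)\big)$, i.e.\ $\log w$ is affine in the path cost with a deterministic offset. First I would rewrite both divergences as expectations under the sampling law, $\KL(p^*\Vert p^\pi)=\E_{p^\pi}[w\log w]$ and $\KL(p^\pi\Vert p^*)=-\E_{p^\pi}[\log w]$, and split $w\log w=(w-1)\log w+\log w$ to get the identity $\KL(p^*\Vert p^\pi)=\E_{p^\pi}\!\big[(w-1)\log w\big]-\KL(p^\pi\Vert p^*)$. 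Since $\KL(p^\pi\Vert p^*)\ge 0$, this already produces the additive $\KL(p^\pi\Vert p^*)$ in the bound (and it is the informative term, since the MPC actor is precisely what drives the reverse divergence down); what remains is to bound $\E_{p^\pi}[(w-1)\log w]$.

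Because $\E_{p^\pi}[w-1]=0$, we have $\E_{p^\pi}[(w-1)\log w]=\mathrm{Cov}_{p^\pi}(w,\log w)$, which is nonnegative (both $w$ and $\log w$ are decreasing functions of $C^\pi$), and Cauchy--Schwarz gives $\E_{p^\pi}[(w-1)\log w]\le\big(\Var_{p^\pi}[w]\big)^{1/2}\big(\E_{p^\pi}[(\log w)^2]\big)^{1/2}$. So it suffices to show $\E_{p^\pi}[(\log w)^2]\le\cE\,\KL(p^\pi\Vert p^*)$. For this I would invoke Girsanov's theorem for the two controlled diffusions of \eqref{eq:stochastic_process} driven by $\pi$ and by $\pi^*$: under the path-integral tuning $\vSigma\vR=\lambda\vI$ the log-likelihood ratio decomposes as $\log w=-N-Q$, with $N$ a mean-zero $p^\pi$-martingale and $Q=\tfrac1\lambda\int_t^T\tfrac12\lVert\vu_*(\tau)-\vu_\pi(\tau)\rVert_\vR^2\,d\tau$ --- which, up to the factor $1/\lambda$, is exactly the quantity the hypothesis bounds by $\cE$. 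Two structural facts then close the estimate: $\E_{p^\pi}[Q]=\KL(p^\pi\Vert p^*)$ (Girsanov's formula for the reverse KL) and the quadratic-variation identity $\E_{p^\pi}[N^2]=\E_{p^\pi}[\langle N\rangle_T]=2\,\E_{p^\pi}[Q]$. Expanding $\E_{p^\pi}[(\log w)^2]=\E_{p^\pi}[N^2]+2\E_{p^\pi}[NQ]+\E_{p^\pi}[Q^2]$, bounding the cross term by Cauchy--Schwarz and using the pathwise estimate $0\le Q\le\cE/\lambda$ (hence $\E[Q^2]\le\tfrac\cE\lambda\E[Q]$ and $\Var[Q]\le\tfrac\cE\lambda\E[Q]$), everything collapses to a multiple of $\cE\,\KL(p^\pi\Vert p^*)$, with the $\lambda$ and numerical constants absorbed into $\cE$ (which, by the hypothesis, is in any case at least $\lambda\,\KL(p^\pi\Vert p^*)$). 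Substituting back gives the theorem; boundedness of $Q$ also supplies the Novikov-type condition that makes $N$ a genuine mean-zero martingale rather than a strict local one.

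The step I expect to be the main obstacle is exactly that last estimate: converting a \emph{pathwise} bound on the control deviation into a bound on the \emph{second moment} of the random path cost $C^\pi$ (equivalently of $\log w$) forces one to keep the Itô term $N$ and its covariance with the bounded quadratic part $Q$ under control, and it is the quadratic-variation identity $\E[N^2]=\E[\langle N\rangle_T]$ together with the almost-sure (not merely in-expectation) form of the near-optimality hypothesis that make the product $\cE\,\KL(p^\pi\Vert p^*)$ come out at the right rate. A minor preliminary point to settle is that $\Psi$ in the statement must be read as the optimal desirability $\Psi^*=e^{-V^*(t,\vs)/\lambda}$, so that $w$ is truly the $p^\pi\!\to\!p^*$ likelihood ratio and writing $\KL(p^*\Vert p^\pi)=\E_{p^\pi}[w\log w]$ is licit; the same hypothesis also delivers the integrability needed there.
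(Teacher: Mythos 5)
The core of your argument does not deliver the theorem's constant, and the step on which everything hinges is false as stated. You reduce the problem to showing $\E_{p^\pi}[(\log w)^2]\leq \cE\,\KL(p^\pi\Vert p^*)$, but by your own Girsanov decomposition $\log w=-Q+N$ with $Q=\int_t^T\frac12\lVert\vu_*-\vu_\pi\rVert_{\vR}^2\,d\tau\leq\cE$ and $N$ a mean-zero stochastic integral, so
\begin{align*}
\E_{p^\pi}\big[(\log w)^2\big]=\E_{p^\pi}[Q^2]-2\,\E_{p^\pi}[QN]+\E_{p^\pi}[N^2],
\end{align*}
and while $\E_{p^\pi}[Q^2]\leq\cE\,\E_{p^\pi}[Q]$, the martingale term contributes $\E_{p^\pi}[N^2]=\E_{p^\pi}[\langle N\rangle_T]$, which is itself proportional to $\KL(p^\pi\Vert p^*)$ with a $\lambda$-dependent factor, plus a cross term of order $\sqrt{\lambda\cE}\,\KL(p^\pi\Vert p^*)$. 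These extra contributions cannot be ``absorbed into $\cE$'': $\cE$ is fixed by the hypothesis, so what you obtain is a bound with a strictly larger, $\lambda$-dependent constant in place of $\cE$, i.e.\ a different (weaker) statement than the theorem. There is also a sign slip in your opening identity ($w\log w=(w-1)\log w+\log w$ gives $\KL(p^*\Vert p^\pi)=\E_{p^\pi}[(w-1)\log w]-\KL(p^\pi\Vert p^*)$, with a minus sign); that alone is harmless for the final inequality, but it signals that the additive $\KL(p^\pi\Vert p^*)$ term in the theorem does not come from this split.

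The fix, and the route the paper takes, is to never square $\log w$: instead of writing $\KL(p^*\Vert p^\pi)=\E_{p^\pi}[w\log w]$, use Girsanov first to write $\KL(p^*\Vert p^\pi)=\E_{p^*}[Q]=\E_{p^\pi}[w\,Q]$ -- under the $p^*$-expectation the martingale part of $\log w$ vanishes, so the integrand that survives is the pathwise-bounded quantity $Q$, not $\log w$. Splitting $\E_{p^\pi}[wQ]=\E_{p^\pi}[Q]+\E_{p^\pi}[(w-1)Q]$ immediately produces the additive term $\KL(p^\pi\Vert p^*)$ with the correct sign, and the remainder is handled by Cauchy--Schwarz, $\E_{p^\pi}[(w-1)^2]=\Var[w]$, and the Bhatia--Davis (or simply $Q\leq\cE$ pathwise) estimate $\E_{p^\pi}[Q^2]\leq\cE\,\E_{p^\pi}[Q]=\cE\,\KL(p^\pi\Vert p^*)$, yielding exactly the stated bound. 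Your identification of $w$, the use of $\E_{p^\pi}[w]=1$, and the Cauchy--Schwarz/pathwise-bound machinery are all the right ingredients; the gap is that you applied them to $\log w$ rather than to $Q$, which is precisely where the uncontrolled It\^o term enters.
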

\begin{proof}
    The proof is provided in Appendix \ref{app:proof:kl_upper_bound}. 
\end{proof}

This shows that $\KL(p^* \Vert p^\pi)$ has an upper bound defined by the reverse KL divergence $\KL(p^\pi \Vert p^*)$ and the variance of the normalized desirability function using the policy $\pi$ for sampling. Next, we show that the MPC actor minimizes this reverse KL divergence and that its performance is bounded by the approximation error of the value function. Moreover, we show that the quality of the estimation improves as the MPC horizon $H$ increases. 

\begin{theorem} 
\label{thm:mpc_upper_bound}
Suppose that $\hat{V}$ is an approximation of the optimal value function with infinity norm ${\cL := \max_{t,s} \vert \hat V(t,\vs) - V^*(t,\vs) \vert}$. The policy $\vpi$ is the solution to the problem \eqref{eq:mpc} with terminal cost $\phi(\vx_f) = \hat V(t_f,\vx_f)$. Then for all MPDs, the reverse KL divergence $\KL(p^\pi \Vert p^*)$ can be bounded as
\begin{align}
    \KL(p^{\pi} \Vert p^*)
    \leq 
    \frac{2 \cL \gamma^H} {\lambda (1-\gamma^H)}
\end{align}
\end{theorem}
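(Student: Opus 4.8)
The plan is to reduce the reverse KL divergence to a policy-suboptimality gap and then bound that gap by an approximate-dynamic-programming argument that exploits the receding-horizon structure of the MPC actor.

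First I would rewrite $\KL(p^\pi \Vert p^*)$ in closed form. Equation~\eqref{eq:optimal_distribution} gives $p^\pi(\rho)/p^*(\rho) = \Psi^*(t,\vs)\, e^{\frac1\lambda C^\pi(\rho)}$, hence $\KL(p^\pi \Vert p^*) = \E_{p^\pi}[\log \Psi^*(t,\vs) + \frac1\lambda C^\pi(\rho)] = \log \Psi^*(t,\vs) + \frac1\lambda \E_{p^\pi}[C^\pi(\rho)]$. Substituting $\log \Psi^* = -\frac1\lambda V^*$ from Equation~\eqref{eq:optimal_value_solution} and $\E_{p^\pi}[C^\pi] = V^\pi$ from Equation~\eqref{eq:stochastic_cost} collapses this to the identity $\KL(p^\pi \Vert p^*) = \frac1\lambda\big(V^\pi(t,\vs) - V^*(t,\vs)\big)$, which is nonnegative because $V^*$ is the pointwise minimum over policies. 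It therefore suffices to prove the uniform suboptimality bound $\max_{t,\vs}\big(V^\pi(t,\vs) - V^*(t,\vs)\big) \le \frac{2\cL\,\gamma^H}{1-\gamma^H}$.

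Next I would exploit the specific heuristic $\phi = \hat V$ and running cost $l$. Under the Kappen relation $\vSigma \vR = \lambda \vI$, the term $l$ is exactly the It\^o/HJB correction that makes the deterministic program~\eqref{eq:mpc} equivalent to the $H$-horizon problem of the original stochastic MDP bootstrapped at the horizon with $\hat V$ --- this is the ``equivalent deterministic'' reformulation used in Equation~\eqref{eq:equivalent_deterministic}. Consequently the optimal value of~\eqref{eq:mpc}, call it $W(t,\vs)$, equals the $H$-step optimal Bellman backup of $\hat V$, and the first control of the optimal plan is the MPC feedback $\pi(t,\vs)$. Since $V^*$ is the fixed point of that $H$-step backup operator and the operator is a $\gamma^H$-contraction in the sup-norm, the hypothesis $\cL = \max_{t,\vs}\lvert \hat V - V^*\rvert$ gives $\max_{t,\vs}\lvert W - V^*\rvert \le \gamma^H \cL$.

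Finally I would carry out the receding-horizon telescoping estimate. Set $m := \max_{t',\vs'}\lvert V^\pi(t',\vs') - V^*(t',\vs')\rvert$. Over one horizon block the closed-loop value $V^\pi$ and the planned value $W$ incur identical running cost and differ only through the discounted bootstrap mismatch $\gamma^H\,\E[V^\pi - \hat V]$ at the next re-planning instant; bounding that mismatch by $m + \cL$ gives, pointwise, $V^\pi(t,\vs) - V^*(t,\vs) \le \big(W(t,\vs) - V^*(t,\vs)\big) + \gamma^H(m + \cL) \le 2\gamma^H\cL + \gamma^H m$. Taking the maximum over $(t,\vs)$ yields $m \le 2\gamma^H\cL + \gamma^H m$, i.e. $m \le \frac{2\cL\gamma^H}{1-\gamma^H}$; for the continuously re-planning MPC one first applies this to the ``re-plan every $H$'' policy and then notes that MPC is no worse. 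Combined with the first step this gives the claimed bound, uniformly over the state space. The main obstacle is the second step --- rigorously establishing that the deterministic program~\eqref{eq:mpc} with the $\hat V$-dependent running cost $l$ genuinely realizes the $H$-step optimal Bellman backup of $\hat V$ for the stochastic problem, i.e. that the It\^o correction and the discount are exactly absorbed into $l$ --- since once that equivalence is granted the remaining contraction-and-telescoping steps are routine.
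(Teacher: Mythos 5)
Your outer skeleton coincides with the paper's: the identity $\KL(p^{\pi}\Vert p^*)=\frac{1}{\lambda}\bigl(\E_{p^{\pi}}[C]-\E_{p^{*}}[C]\bigr)=\frac{1}{\lambda}\bigl(V^{\pi}(t,\vs)-V^{*}(t,\vs)\bigr)$ obtained from Equations~\eqref{eq:optimal_value_solution} and \eqref{eq:optimal_distribution} is exactly the paper's opening display, and your closing estimate --- $2\gamma^{H}\cL$ per horizon block plus $\gamma^{H}$ times the remaining gap, summed geometrically to $\frac{2\cL\gamma^{H}}{\lambda(1-\gamma^{H})}$ --- is the same recursion the paper runs at successive horizon instants, following \cite{lowrey2018plan}.

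The genuine gap is the middle step you yourself flag, for two concrete reasons. First, it misreads the setting of Theorem~\ref{thm:mpc_upper_bound}: there the MPC \eqref{eq:mpc} uses only the terminal heuristic $\phi=\hat V$ together with the \emph{task} running cost (in the paper's proof $l(\tau,\vx,\vu):=\gamma^{\tau-t}q(\tau,\vx)+\frac12\vu^{\top}\vR\,\vu$); the value-based running cost \eqref{eq:intermediate_cost} belongs to Proposition~\ref{thm:running_cost}, which is stated only for the exact $V^*$. Second, even if that running cost were included, the equivalence you need --- that the deterministic program realizes the $H$-step stochastic Bellman backup of $\hat V$ --- cannot hold exactly with an approximation: the It\^o/trace correction is absorbed only because $V^*$ satisfies the HJB equation (this is precisely the cancellation in Appendix~\ref{app:proof:running_cost} via Equation~\eqref{eq:lmdp_policy} and Lemma~\ref{thm:deterministic_equivalence}); replacing $V^*$ by $\hat V$ leaves a residual equal to the HJB defect of $\hat V$, which is not controlled by the sup-norm error $\cL$ (closeness of values says nothing about $\partial_t\hat V$, $\partial_{\vx}\hat V$, $\partial_{\vx}^{2}\hat V$). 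The paper never attempts this equivalence; it simply takes, as in \cite{lowrey2018plan}, that $p^{\pi}$ minimizes the bootstrapped objective $\E_{p^{\pi}}\bigl[\gamma^{H}\hat V(t_f,\vx_f)+\int_t^{t_f} l\,d\tau\bigr]$ and runs the add-and-subtract argument with the two $\gamma^{H}\cL$ terms directly on that. Your further claim that the continuously re-planning MPC is ``no worse'' than the re-plan-every-$H$ policy is also asserted rather than proved (the paper is informal on the same point, applying its bound at the re-planning instants). To complete your proof you must either establish the backup equivalence --- which fails exactly, as noted --- or adopt the paper's assumption that $\vpi$ optimizes the $\hat V$-bootstrapped $H$-horizon objective, after which your contraction-and-telescoping step goes through and reproduces the paper's recursion.
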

\begin{proof}
    The proof is provided in Appendix \ref{app:proof:mpc_upper_bound}. 
\end{proof}
\vspace*{-4mm}
Note that if $H_1<H_2$ then ${ \frac{\gamma^{H_2}} {(1-\gamma^{H_2})} < \frac{\gamma^{H_1}} {(1-\gamma^{H_1})} }$. Therefore, as the horizon increases the MPC is less susceptible to the value function approximation error. 

\subsection{Role of the Running Cost}

In this section, we motivate the choice of the DMPC running cost. However, we do not provide any formal proof. The goal is to see how we can extend the idea of using the value function in the heuristic function to the running cost. We show that, at least for the case where we have the optimal value function, this is indeed possible. Thus, for the following analysis, we assume that the exact optimal value function is provided. However, later in the result section, we will empirically show that for the scenarios with sparse costs, the running cost based on the approximate value function also accelerates the convergence of the critic.


\begin{proposition} \label{thm:running_cost}
The control policy $\pi(t,\vs)$ that minimizes the reverse KL divergence $\KL(p^\pi \Vert p^*)$ of the problem defined in Equations \eqref{eq:stochastic_process}-\eqref{eq:path_cost} is also the solution to the deterministic problem 
\begin{align}
    \vpi(t, \vs) = \arg\min_{\pi} \left\{ 
    \phi_d(\vx(t+H)) + \int_t^{t+H} 
    \Big(
        l_d(\tau, \vx)
        + \frac{1}{2} \vu_\pi^\top(\tau) \vR \, \vu_\pi(\tau)    
    \Big) d\tau 
    \right\}
\label{eq:equivalent_deterministic}
\end{align}
where $l_d(\tau, \vx)$, the running cost, and $\phi_d(\vx(t+H))$, the heuristic function are defined as
\begin{align}
\label{eq:intermediate_cost}
    &l_d(\tau, \vx) = 
        - \partial_t V^*(\tau,\vx) 
        - \vf(\tau,\vx)^\top \partial_{\vx} V^*(\tau,\vx)
        + \frac 12 \partial_{\vx} V^*(\tau,\vx)^\top \mathbf{\Xi}(\tau, \vx) \, \partial_{\vx} V^*(\tau,\vx),
    \\&\phi_d(\vx(t+H)) = V^*(t+H,\vx(t+H)),
\label{eq:terminal_cost}
\end{align}
with $V^*(\tau,\vx)$ is the optimal value function of the stochastic problem defined in \eqref{eq:optimal_value} and the state trajectory evolves based on the following deterministic dynamics 
\begin{align}
\label{eq:deterministic_dynamic}
\dot\vx = \vf(\tau, \vx) + \vg(\tau,\vx) \vu_{\pi}, \quad \vx(t) = \vs
\end{align}
\end{proposition}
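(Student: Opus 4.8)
My plan is to prove the claim by showing that \emph{both} the minimizer of $\KL(p^\pi\Vert p^*)$ and the solution of the deterministic problem \eqref{eq:equivalent_deterministic} coincide with the same feedback law, namely the optimal policy $\pi^*$ of the stochastic problem \eqref{eq:stochastic_process}--\eqref{eq:path_cost}. For the first identification I would use the path-integral identities \eqref{eq:optimal_value_solution}--\eqref{eq:desirability}; for the second, a Hamilton--Jacobi verification argument showing that the stochastic value function $V^*$, restricted to $[t,t+H]$, is itself the value function of \eqref{eq:equivalent_deterministic} --- which is essentially forced by the way $l_d$ and $\phi_d$ are defined in \eqref{eq:intermediate_cost}--\eqref{eq:terminal_cost}.

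\emph{Step 1: the reverse-KL minimizer is $\pi^*$.} Identity \eqref{eq:optimal_distribution} holds for an arbitrary sampling policy $\pi$, the control-dependent terms of $C^\pi$ being precisely the Girsanov correction relating $p^\pi$ to the passive dynamics. Hence, along a trajectory $\rho$ on $[t,T]$, $\log\!\bigl(p^\pi(\rho)/p^*(\rho)\bigr) = \log\Psi^*(t,\vs) + \tfrac1\lambda C^\pi$, where $C^\pi$ is the path cost \eqref{eq:path_cost} incurred along $\rho$. Taking $\E_{p^\pi}[\cdot]$ and using $\E_{p^\pi}[C^\pi]=V^\pi(t,\vs)$ together with $\Psi^*(t,\vs)=e^{-V^*(t,\vs)/\lambda}$ from \eqref{eq:optimal_value_solution}, I get
\[
\KL(p^\pi\Vert p^*) \;=\; \tfrac1\lambda\bigl(V^\pi(t,\vs)-V^*(t,\vs)\bigr)\;\ge\;0,
\]
with equality iff $V^\pi=V^*$, i.e. $\pi=\pi^*$. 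Moreover, since the dynamics are control-affine and the penalty $\tfrac12\vu^\top\vR\vu$ is strictly convex ($\vR\succ0$), the inner minimization in the stochastic HJB equation for $V^*$ is attained at the feedback law $\pi^*(\tau,\vx)=-\vR^{-1}\vg(\tau,\vx)^\top\partial_{\vx}V^*(\tau,\vx)$; the stochastic integral term $\vu_\pi^\top\vR\,d\cB$ has zero conditional mean and does not enter this step.

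\emph{Step 2: $V^*$ solves the deterministic problem.} Let $W(\tau,\vx)$ be the value function of \eqref{eq:equivalent_deterministic} with dynamics \eqref{eq:deterministic_dynamic} on $[t,t+H]$. Its Hamilton--Jacobi equation is $\partial_\tau W + \min_{\vu}\bigl\{ l_d + \tfrac12\vu^\top\vR\vu + (\vf+\vg\vu)^\top\partial_{\vx}W\bigr\}=0$ with terminal data $W(t+H,\cdot)=\phi_d$. The inner minimum is attained at $\vu=-\vR^{-1}\vg^\top\partial_{\vx}W$ and equals $\vf^\top\partial_{\vx}W-\tfrac12\,\partial_{\vx}W^\top\mathbf{\Xi}\,\partial_{\vx}W$ with $\mathbf{\Xi}=\vg\vR^{-1}\vg^\top$. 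Substituting the candidate $W=V^*$ and the definition \eqref{eq:intermediate_cost} of $l_d$, every term cancels, so the residual of this equation is identically zero, while the terminal condition holds by \eqref{eq:terminal_cost}. A standard verification argument then gives $W\equiv V^*$ on $[t,t+H]$ and identifies the optimal feedback of \eqref{eq:equivalent_deterministic} as $(\tau,\vx)\mapsto-\vR^{-1}\vg(\tau,\vx)^\top\partial_{\vx}V^*(\tau,\vx)$, which is exactly $\pi^*$ from Step 1. Chaining the two steps proves the proposition.

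\emph{Expected main obstacle.} The algebra of Step 2 is mere bookkeeping --- $l_d$ is built so that $V^*$ annihilates the deterministic Hamilton--Jacobi residual --- so the real work lies in justifying the manipulations: verifying that \eqref{eq:optimal_distribution} genuinely applies to a general sampling policy and that $C^\pi$ is $p^\pi$-integrable so the cross-entropy computation of Step 1 is legitimate, and supplying the regularity hypotheses ($V^*\in C^{1,2}$ on the relevant domain, well-posedness of the deterministic value function, uniqueness of the Hamiltonian minimizer) needed to run the verification argument. Since the proposition is explicitly stated without a formal proof, I would present these only at the level of the standing smoothness assumptions of the path-integral formulation.
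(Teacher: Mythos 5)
Your proposal is sound, but it reaches the proposition by a genuinely different route than the paper. The paper works at the level of the \emph{objectives}: starting from the Girsanov identity $\KL(p^\pi\Vert p^*)=\E_{p^\pi}\!\big[\int\frac12\lVert\vu_\pi-\vu^*\rVert_{\vR}^2\,d\tau\big]$, it substitutes $\vu^*=-\vR^{-1}\vg^\top\partial_{\vx}V^*$, applies It\^o's lemma to $V^*(t,\vx(t))$ to eliminate the cross term, and obtains the reverse KL as (a constant plus) the expected cost of a stochastic problem whose running cost is $l_d+\frac12\vu_\pi^\top\vR\vu_\pi-\frac{\lambda}{2}\Tr[\mathbf{\Xi}\,\partial_{\vx}^2V^*]$; a separate lemma on stochastic--deterministic equivalence (the trace term being exactly the It\^o correction) then turns this into the deterministic problem \eqref{eq:equivalent_deterministic}. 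You instead identify the \emph{minimizers}: the free-energy identity $\KL(p^\pi\Vert p^*)=\frac1\lambda\big(V^\pi-V^*\big)$ shows the reverse-KL minimizer is $\pi^*=-\vR^{-1}\vg^\top\partial_{\vx}V^*$, and a deterministic verification argument shows $V^*$ satisfies the Hamilton--Jacobi equation of \eqref{eq:equivalent_deterministic} by construction of $l_d$ and $\phi_d$, so the deterministic optimal feedback is the same law. Your route is shorter and avoids both the It\^o manipulation and the equivalence lemma, and it handles the truncation to $[t,t+H]$ cleanly through the terminal condition $\phi_d=V^*(t+H,\cdot)$; what it does not deliver is the paper's stronger, objective-level statement that the KL functional \emph{equals} the deterministic cost up to an additive constant, which is the structural fact the authors lean on for the importance-sampling interpretation and for motivating the use of an approximate $\hat V$ in place of $V^*$. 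Two small points to tighten: in Step 2 the quantity you call the ``inner minimum'' should include $l_d$ (your subsequent cancellation makes clear you intend this), and note that Step 2 is essentially tautological for any smooth candidate $W$, so the substantive content of your proof sits entirely in Step 1 together with the regularity/uniqueness caveats ($V^*\in C^{1,2}$, uniqueness of the Hamiltonian minimizer) that you correctly flag and that the paper also implicitly assumes.
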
%
\begin{proof}
    The proof is provided in Appendix \ref{app:proof:running_cost}. 
\end{proof}
\vspace*{-4mm}
Proposition~\ref{thm:running_cost} has an important implication; the primary stochastic optimization problem is transformed into a deterministic one. Therefore, a deterministic MPC solver such as the one defined in Equation~\eqref{eq:mpc} can be used. This ultimately means that in order to find the optimal sampling policy, we only need to solve a deterministic MPC problem. This further allows us to employ more sophisticated tools from deterministic optimal control, e.g., Differential Dynamic Programming (DDP) \cite{mayne1966DDP}. 
\section{Results}
\label{sec:results}

In this section, we describe the implementation details of the DMPC pipeline. We describe how the different components are designed and highlight the techniques used to make the interaction between the actor and the critic more robust. We then present the experiment results.

\paragraph{Critic Implementation}
During the rollout of the policy, the critic stores the transition tuples in a replay buffer. Regularly, it samples $K$ mini-batches of size $N$ to update the value function. The value function is represented by a multilayer perceptron (in our experiments, 3 layers with 12 units each and tanh activations). Following an approach similar to \cite{pong2018TDM} and \cite{schaul2015UVF}, we condition the value function on a goal and a time to reach the goal. It can be interpreted as the value of being in a particular state if a specific target has to be reached within a given amount of time. 
Since the derivatives of the network are computed on the actor side, particular attention has to be given to the architecture and the training procedure. The choice of a differentiable activation function, such as tanh, is necessary to guarantee the differentiability of the whole network. Moreover, we noticed that decaying the weights results in smoother behavior. Finally, the critic uses a target network \cite{lillicrap2016RL} so that the actor only receives a Polyak averaged version of the value function.  

\paragraph{Actor Implementation}
The MPC actor uses a DDP-based algorithm known as SLQ \cite{Farshidian2017SLQ, Farshidian2017MPC}. The Jacobian and the Hessian of the value function network, which are necessary to compute the derivatives of the cost function, are computed using an automatic differentiation library.   

\subsection{Experimental Results}
With the following experiments in simulation, we would like to confirm the theory and intuition derived above. More specifically, we would like to show that the algorithm is capable of handling a sparse binary reward set-up, that using the running cost \eqref{eq:intermediate_cost} yields better convergence, and highlight the importance of the MPC time horizon during learning. 

Our experiments are based on the Ballbot robot (see Appendix \ref{app:experiment_details} for more details). In order to answer the questions above, we design a task where the agent has to reach a target in 3 seconds from any initial position. Additionally, we add walls to the environment so that most of the time, it cannot reach the target via the shortest path, see Figure \ref{fig:result_trajectories}. While the target reaching cost is simply encoded as the euclidean distance from the current position to the goal, the walls are encoded by a termination of the episode with a fixed penalty.
First, we analyze the system for different MPC time horizons. Then, we assess the performance when the running cost \eqref{eq:intermediate_cost} is left out (i.e., we only use the value function as the heuristic function), which corresponds to the vanilla case \cite{lowrey2018plan}. Trajectories of the Ballbot's center of mass for different starting positions are shown in Figure \ref{fig:result_trajectories}.

\footnotetext{\textrm{Supplementary video: \url{https://youtu.be/9p4qHBUZDMA}}}

\begin{figure*}[t!]
    \centering
    \begin{subfigure}[t]{0.4\textwidth}
        \centering
        \includegraphics[width=\textwidth]{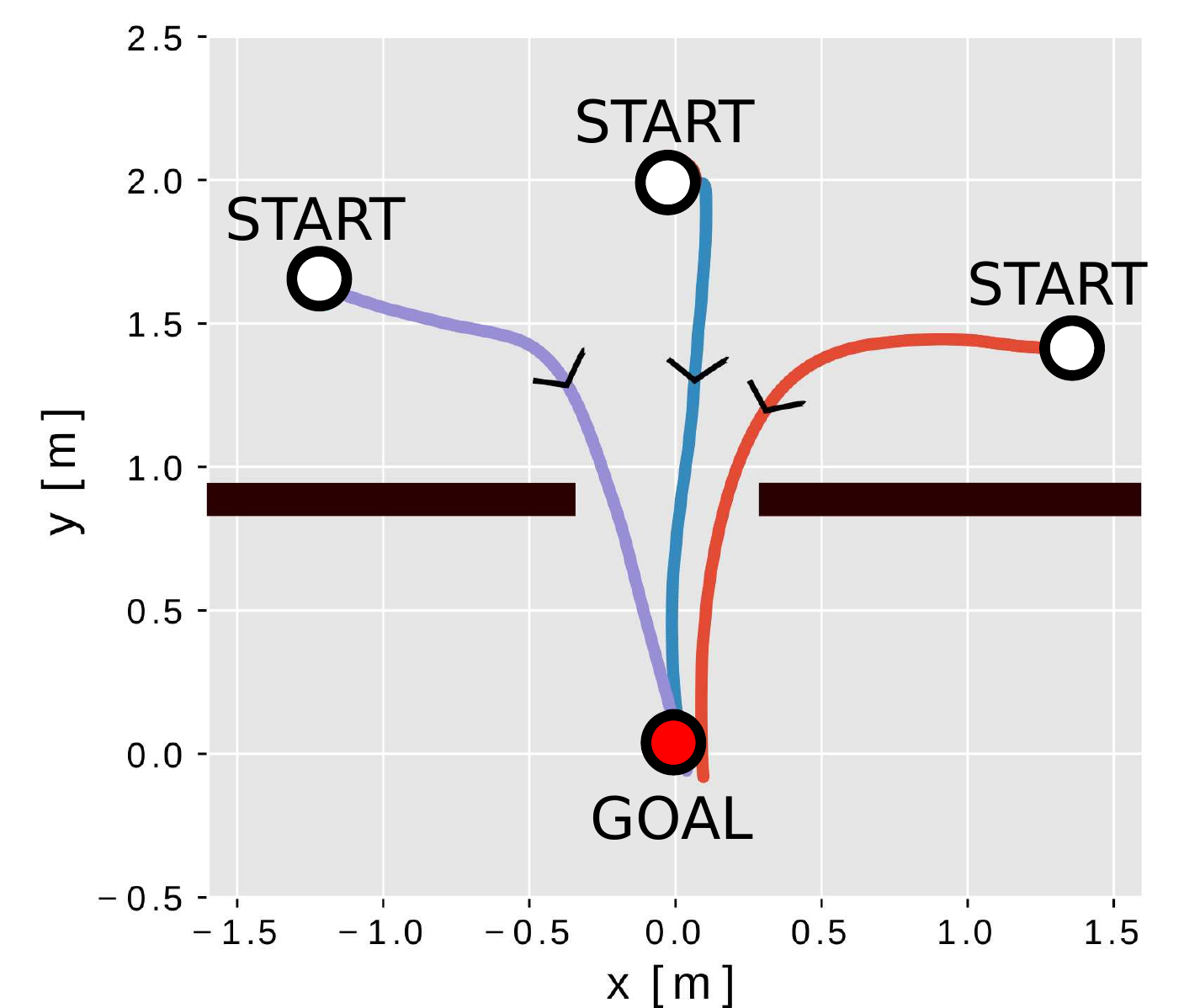}
        \caption{}
        \label{fig:result_trajectories}
    \end{subfigure}%
    ~ 
    \begin{subfigure}[t]{0.45\textwidth}
        \centering
        \includegraphics[width=\textwidth]{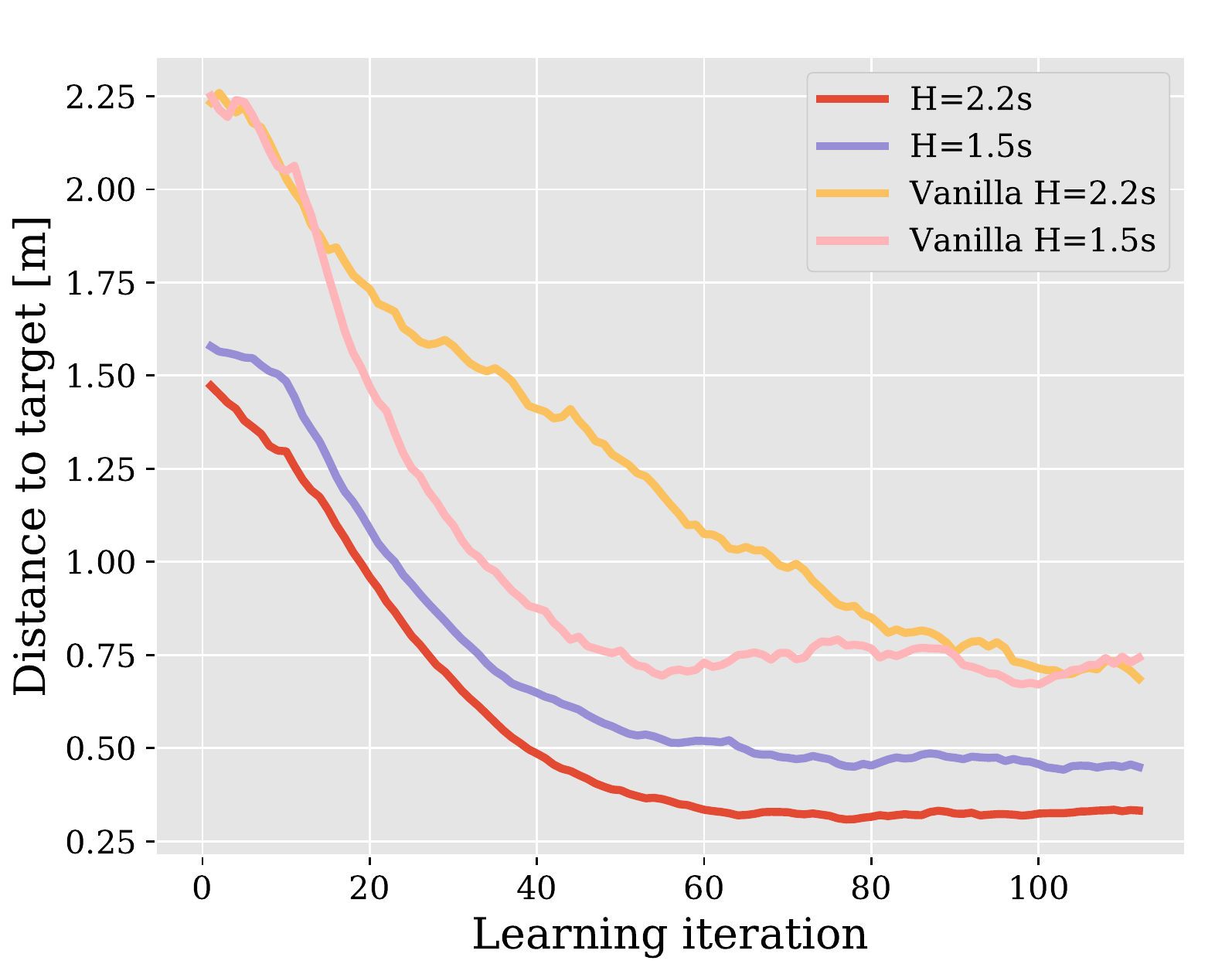}
        \caption{}
        \label{fig:result_perf}
    \end{subfigure}
    \vspace{-2mm}
    \caption{(a) Top down view of the trajectories of the Ballbot's center of mass for different starting positions. (b) Performance on the target reaching task for different time horizons and cost functions.}
    \vspace*{-4mm}
\end{figure*}


\begin{figure}[htp]
    \centering
    \includegraphics[width=.15\textwidth]{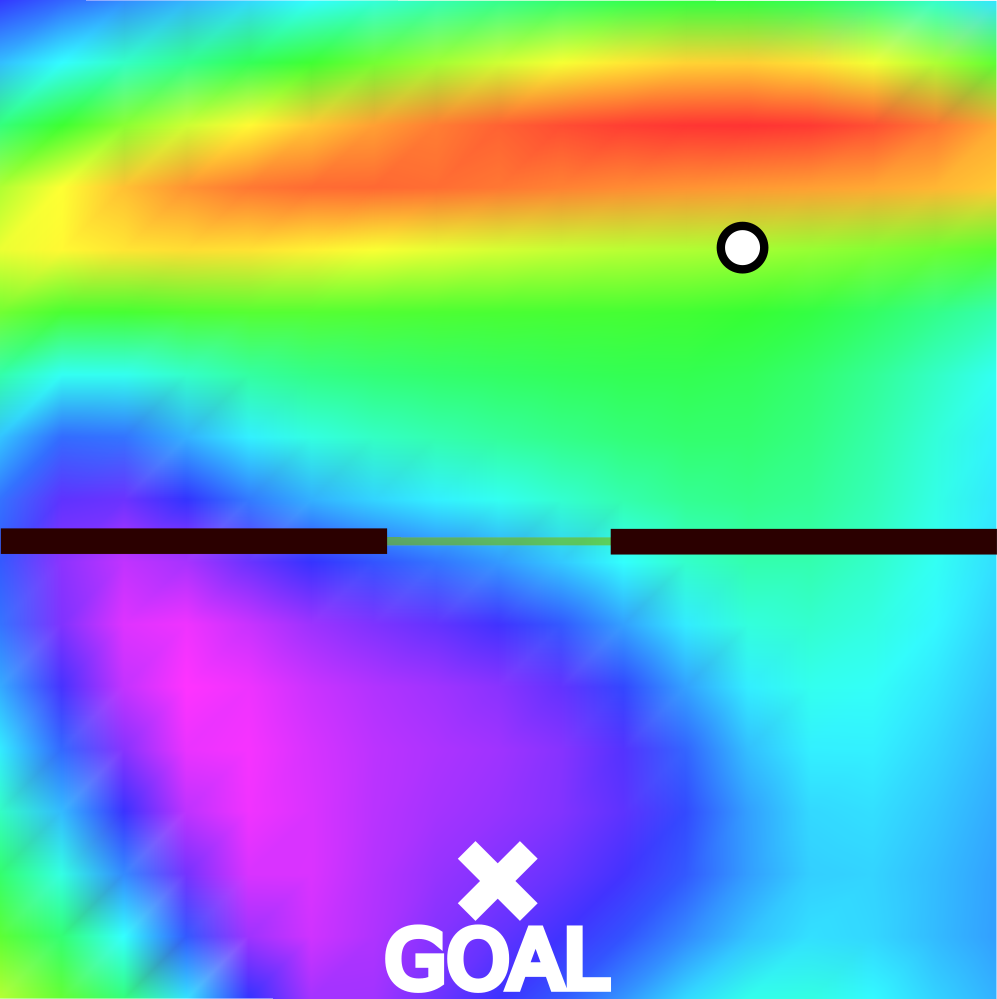}
    \includegraphics[width=.15\textwidth]{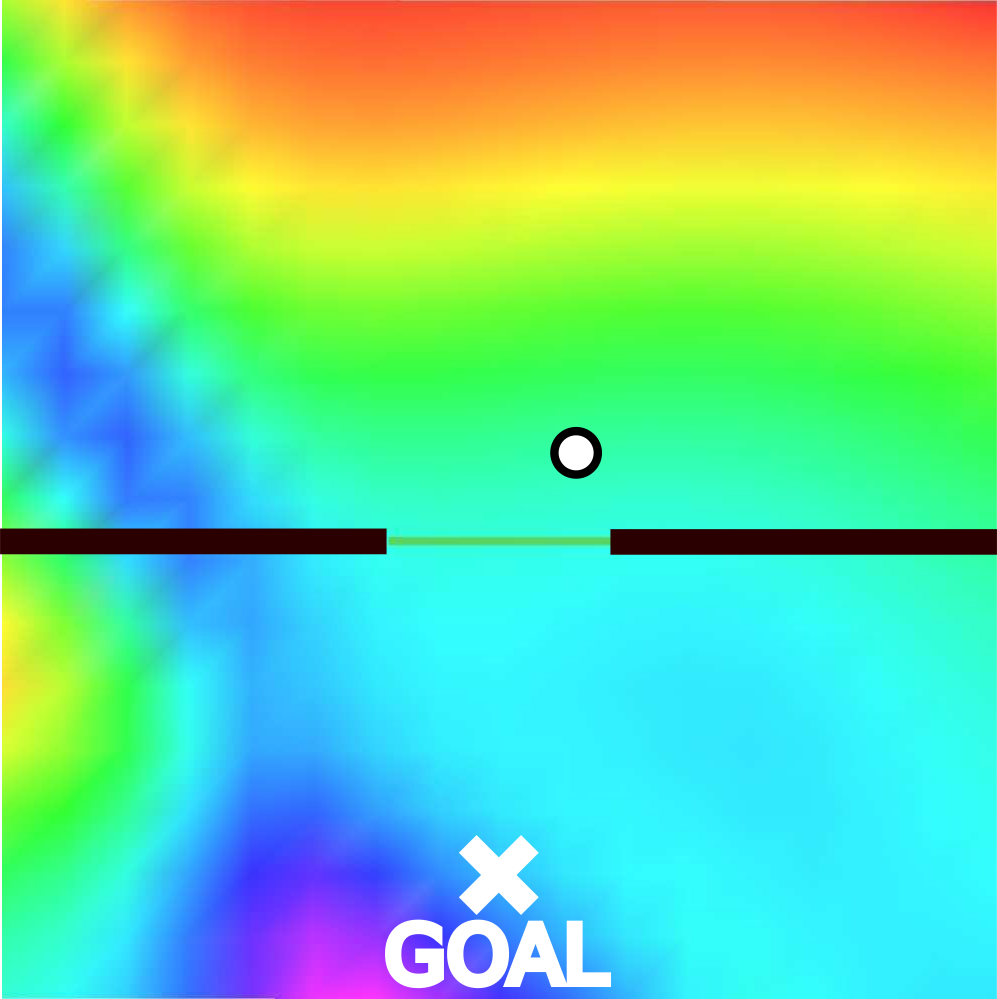}
    \includegraphics[width=.15\textwidth]{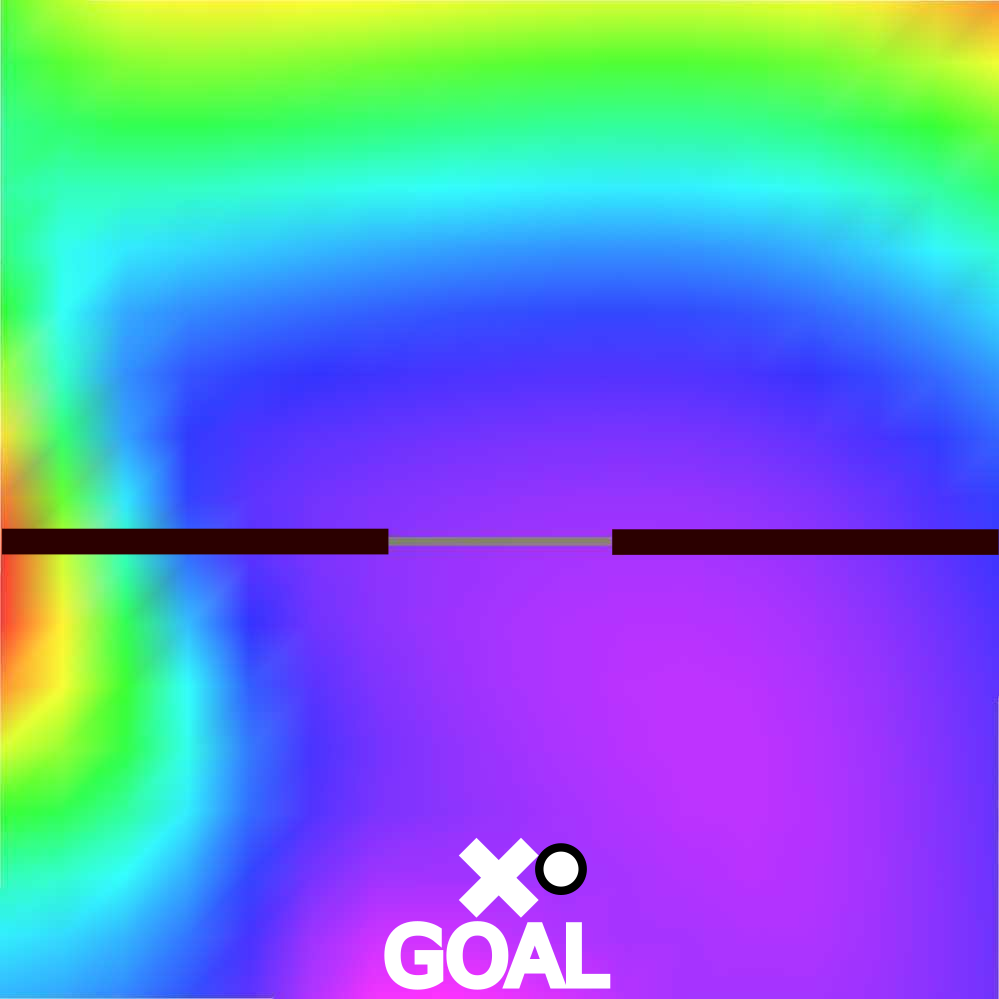}
    \hspace{.03\textwidth}
    \includegraphics[width=.15\textwidth]{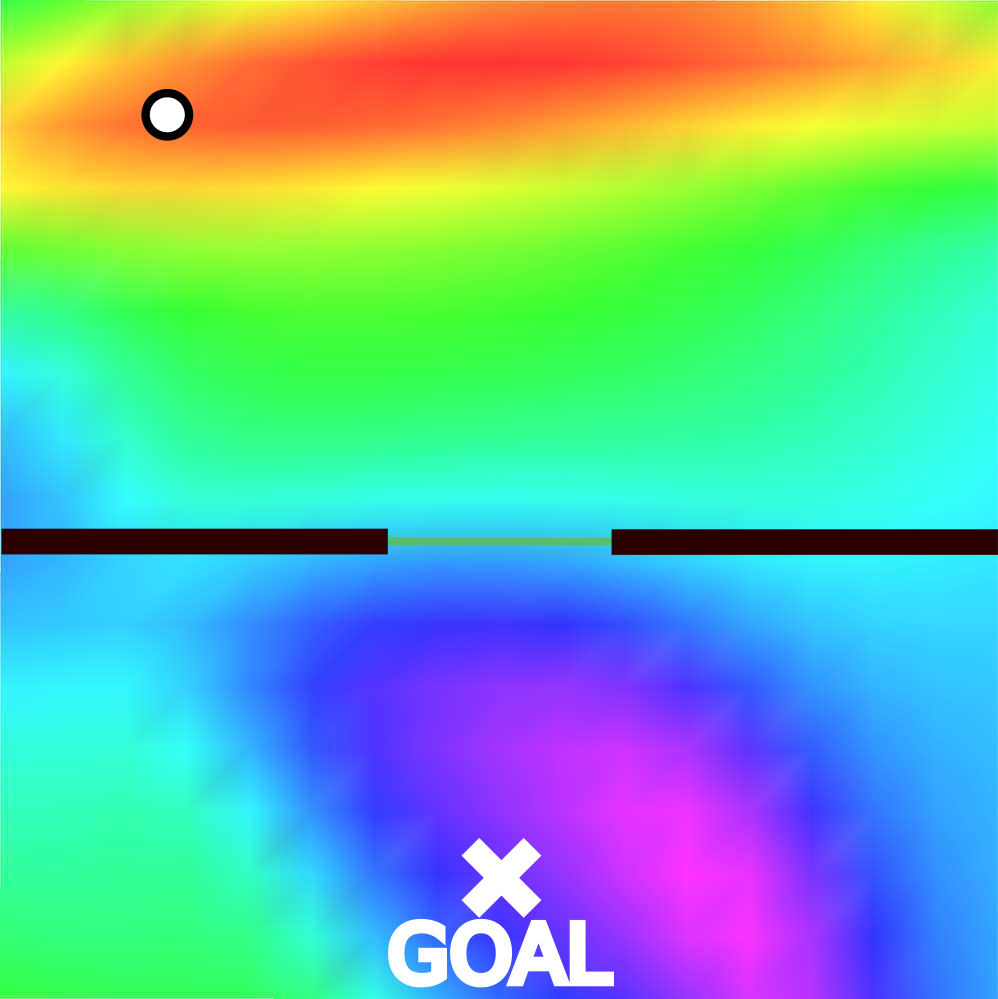}
    \includegraphics[width=.15\textwidth]{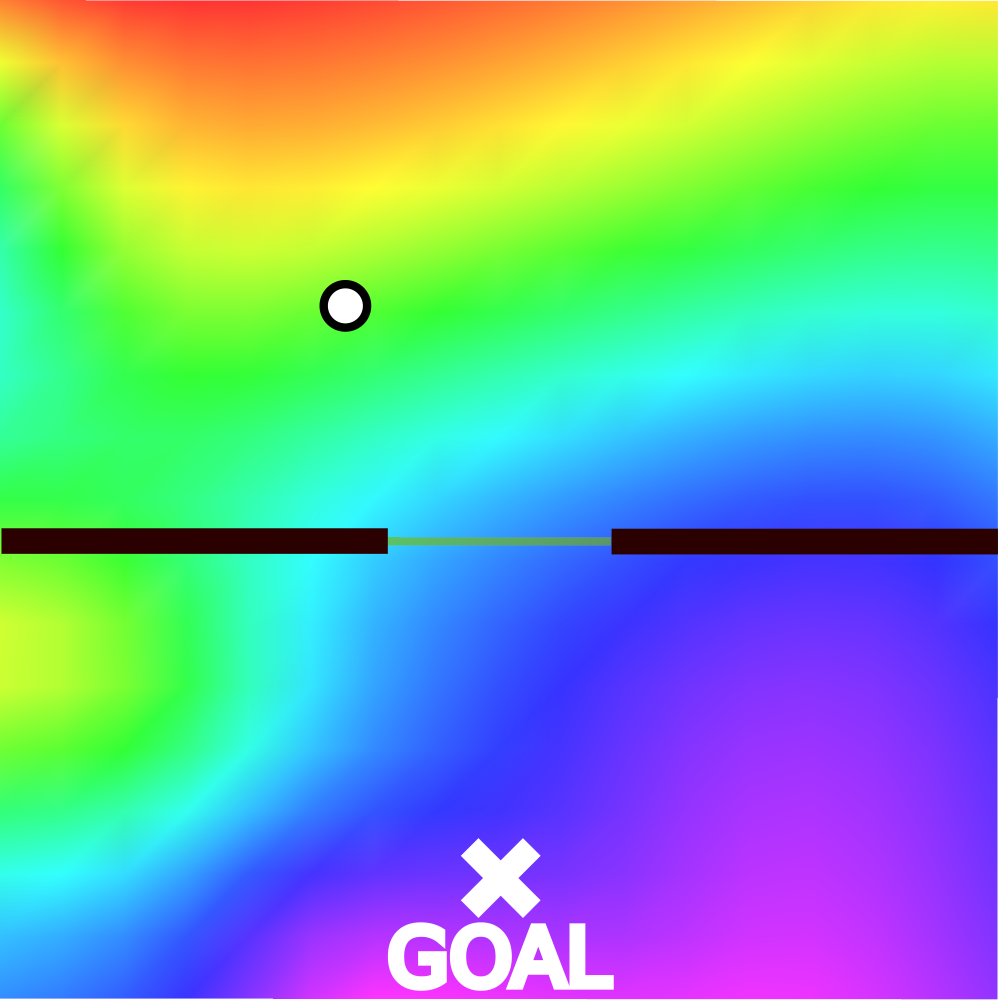}
    \includegraphics[width=.15\textwidth]{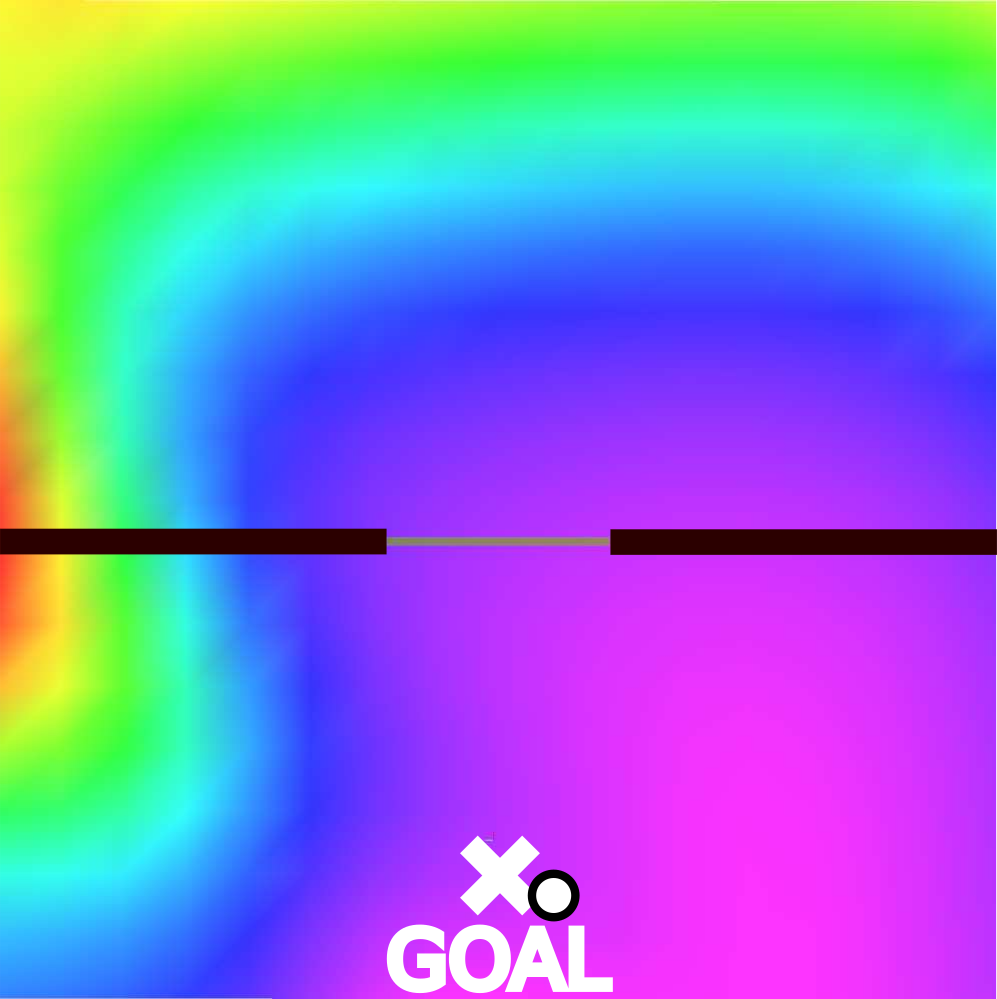}
    \caption{Evolution of the running cost \eqref{eq:intermediate_cost} along two trajectories (first three and last three images). The position of the Ballbot is denoted by the white dot and the target by the white cross. The heat-map represents the magnitude of the running cost at a given time for different x and y coordinates. Red corresponds to high cost and violet to low cost regions.}
    \vspace*{-4mm}
    \label{fig:result_heatmap}
\end{figure}

\paragraph{Influence of the Running Cost} As shown in Figure~\ref{fig:result_perf}, the running cost plays a vital role in solving the task. While the actors devoid of the running cost solve the reaching task well (it is a smooth and straightforward cost after all), they often collide with the walls. When the horizon is too long for them, the optimization is prone to overlooking the wall. This is because the value is only used at the end of the trajectory. Also, the exploration provided by the value function is not sufficient enough, and the Ballbot often collides with the wall. When using the running cost, the task is solved successfully upon convergence. The impact of that term is shown in Figure \ref{fig:result_heatmap}. The information encoded in the value function is able to produce regions of low cost and guide the solution to avoid the obstacles along the path. When the robot starts on the left side of the field, a region of low cost on the bottom right side is created, encouraging the robot to move forward and avoid the obstacle. The opposite happens when the robot starts on the right. Moreover, at the beginning of the trajectories, regions of high cost at the top of the field encourage the robot to move towards the goal.

\paragraph{Influence of the Horizon on the Convergence of the Value Function} 
As described in the previous section, the time horizon of the MPC plays a vital role. Indeed, with a longer time horizon, the actor can look further into the future during the trajectory optimization step. It accelerates the convergence of the value function and improves exploration. As can be seen in Figure \ref{fig:result_perf}, during training, the actor with the longer time horizon outperforms the one with a shorter one. The actor with a longer time horizon is able to see beyond the wall faster in order to reach the target. The actor with the shorter horizon tends to focus more on the wall and needs more time to get past it. On the other hand, increasing the time horizon results in a higher computational cost for the trajectory optimizer so that a trade-off has to be made.

\section{Related Work}
\label{sec:related_work}

The use of learning in conjunction with planning has been studied extensively in the past. Prominently, in the discrete domain, learning evaluation functions has been employed with tree search methods resulting in systems capable of planning over long time horizons \cite{silver2017AlphaGo, guez2018learning}.  
In inverse reinforcement learning/optimal control \cite{ross2010Dagger, abbeel2004IRL, dvijotham2010IOC}, the planner is taught to match the behavior of an expert by inferring a cost function. These methods, however, are bounded by the performance of the expert, which is assumed to showcase optimal behavior. In our problem setting, the cost function is inferred indirectly via the value function that is learned from the environment-issued rewards/costs. The performance is only bounded by the capacity to learn the optimal value function. 

The use of trajectory optimization with value function learning has been studied most recently in the Plan Online Learn Offline framework \cite{lowrey2018plan}. By using the value function as the terminal cost of their trajectory optimizer, they show improved performance of the policy beyond the optimizer's time horizon. Here, we further extend their idea to handle stochastic systems explicitly and formulate the optimizer's running cost such that it results in an importance sampler of the optimal value function. 

When combining planning and learning, exploration plays a key role because it is difficult to cover the task-relevant state space in high-dimensional problems efficiently. To this end, methods such as path integral optimal control employ importance sample schemes \cite{kappen16adaptive}. In RL, methods such as Guided Policy Search \cite{levine2013GPS} use a planner to direct the policy learning stage and sample more efficiently from high reward regions. 

\section{Conclusion and Future Work}
\label{sec:discussion}
In this paper, we presented an off-policy actor-critic algorithm called DMPC that extends previous work on the combination of trajectory optimization and global value function learning. We first show that using a value function in the heuristic function leads to a temporally global optimal solution. Next, we show that using the running cost \eqref{eq:intermediate_cost} results not only in an importance sampling scheme that improves the convergence of the value function estimation but is also capable of taking system uncertainties into account.
In future work, we like to extend the value function to encode more information. For example, we could condition it on a local robot-centric map that would allow it to make decisions in dynamic environments to avoid obstacles.

\clearpage
\acknowledgments{This work was supported by NVIDIA, the Swiss National Science Foundation through the National Centre of Competence in Research Robotics (NCCR Robotics), the European Research Council (ERC) under the European Union’s Horizon 2020 research and innovation programme under grant agreement No. 852044.}


\bibliography{ms}

\begin{thebibliography}{23}
\providecommand{\natexlab}[1]{#1}
\providecommand{\url}[1]{\texttt{#1}}
\expandafter\ifx\csname urlstyle\endcsname\relax
  \providecommand{\doi}[1]{doi: #1}\else
  \providecommand{\doi}{doi: \begingroup \urlstyle{rm}\Url}\fi

\bibitem[{Alexis} et~al.(2011){Alexis}, {Papachristos}, {Nikolakopoulos}, and
  {Tzes}]{Alexis2011}
K.~{Alexis}, C.~{Papachristos}, G.~{Nikolakopoulos}, and A.~{Tzes}.
\newblock Model predictive quadrotor indoor position control.
\newblock In \emph{2011 19th Mediterranean Conference on Control Automation
  (MED)}, pages 1247--1252, June 2011.
\newblock \doi{10.1109/MED.2011.5983144}.

\bibitem[{Farshidian} et~al.(2017){Farshidian}, {Jelavic}, {Satapathy},
  {Giftthaler}, and {Buchli}]{Farshidian2017MPC}
F.~{Farshidian}, E.~{Jelavic}, A.~{Satapathy}, M.~{Giftthaler}, and
  J.~{Buchli}.
\newblock Real-time motion planning of legged robots: A model predictive
  control approach.
\newblock In \emph{Humanoids}, pages 577--584, 2017.
\newblock \doi{10.1109/HUMANOIDS.2017.8246930}.

\bibitem[Koenemann et~al.(2015)Koenemann, Prete, Tassa, Todorov, Stasse,
  Bennewitz, and Mansard]{Koenemann2015}
J.~Koenemann, A.~D. Prete, Y.~Tassa, E.~Todorov, O.~Stasse, M.~Bennewitz, and
  N.~Mansard.
\newblock Whole-body model-predictive control applied to the hrp-2 humanoid.
\newblock In \emph{IROS}, pages 3346--3351, 2015.

\bibitem[Silver et~al.(2017)Silver, Schrittwieser, Simonyan, Antonoglou, Huang,
  Guez, Hubert, Baker, Lai, Bolton, Chen, Lillicrap, Hui, Sifre, van~den
  Driessche, Graepel, and Hassabis]{silver2017AlphaGo}
D.~Silver, J.~Schrittwieser, K.~Simonyan, I.~Antonoglou, A.~Huang, A.~Guez,
  T.~Hubert, L.~Baker, M.~Lai, A.~Bolton, Y.~Chen, T.~Lillicrap, F.~Hui,
  L.~Sifre, G.~van~den Driessche, T.~Graepel, and D.~Hassabis.
\newblock Mastering the game of go without human knowledge.
\newblock \emph{Nature}, 550:\penalty0 354--, Oct. 2017.
\newblock URL \url{http://dx.doi.org/10.1038/nature24270}.

\bibitem[Lillicrap et~al.(2016)Lillicrap, Hunt, Pritzel, Heess, Erez, Tassa,
  Silver, and Wierstra]{lillicrap2016RL}
T.~P. Lillicrap, J.~J. Hunt, A.~Pritzel, N.~Heess, T.~Erez, Y.~Tassa,
  D.~Silver, and D.~Wierstra.
\newblock Continuous control with deep reinforcement learning.
\newblock In \emph{4th International Conference on Learning Representations,
  {ICLR} 2016, Conference Track Proceedings}, 2016.
\newblock URL \url{http://arxiv.org/abs/1509.02971}.

\bibitem[Hwangbo et~al.(2019)Hwangbo, Lee, Dosovitskiy, Bellicoso, Tsounis,
  Koltun, and Hutter]{hwangbo2019RL}
J.~Hwangbo, J.~Lee, A.~Dosovitskiy, D.~Bellicoso, V.~Tsounis, V.~Koltun, and
  M.~Hutter.
\newblock Learning agile and dynamic motor skills for legged robots.
\newblock \emph{Science Robotics}, 4\penalty0 (26), 2019.
\newblock \doi{10.1126/scirobotics.aau5872}.
\newblock URL \url{https://robotics.sciencemag.org/content/4/26/eaau5872}.

\bibitem[Sutton and Barto(2018)]{sutton1998RL}
R.~S. Sutton and A.~G. Barto.
\newblock \emph{Reinforcement Learning: An Introduction}.
\newblock The MIT Press, second edition, 2018.
\newblock URL \url{http://incompleteideas.net/book/the-book-2nd.html}.

\bibitem[Lowrey et~al.(2019)Lowrey, Rajeswaran, Kakade, Todorov, and
  Mordatch]{lowrey2018plan}
K.~Lowrey, A.~Rajeswaran, S.~Kakade, E.~Todorov, and I.~Mordatch.
\newblock Plan online, learn offline: Efficient learning and exploration via
  model-based control.
\newblock In \emph{International Conference on Learning Representations}, 2019.
\newblock URL \url{https://openreview.net/forum?id=Byey7n05FQ}.

\bibitem[Kappen(2007)]{kappen2007}
H.~J. Kappen.
\newblock An introduction to stochastic control theory, path integrals and
  reinforcement learning.
\newblock In \emph{AIP conference proceedings}, volume 887, pages 149--181.
  AIP, 2007.

\bibitem[Kappen and Ruiz(2016)]{kappen16adaptive}
H.~J. Kappen and H.~C. Ruiz.
\newblock Adaptive importance sampling for control and inference.
\newblock \emph{Journal of Statistical Physics}, 162\penalty0 (5):\penalty0
  1244--1266, Mar 2016.
\newblock ISSN 1572-9613.
\newblock \doi{10.1007/s10955-016-1446-7}.

\bibitem[Thijssen and Kappen(2015)]{thijssen2015}
S.~Thijssen and H.~Kappen.
\newblock Path integral control and state-dependent feedback.
\newblock \emph{Physical Review E}, 91\penalty0 (3):\penalty0 032104, 2015.

\bibitem[Theodorou et~al.(2010)Theodorou, Buchli, and
  Schaal]{theodorou2010generalized}
E.~Theodorou, J.~Buchli, and S.~Schaal.
\newblock A generalized path integral control approach to reinforcement
  learning.
\newblock \emph{journal of machine learning research}, 11\penalty0
  (Nov):\penalty0 3137--3181, 2010.

\bibitem[Williams et~al.(2016)Williams, Drews, Goldfain, Rehg, and
  Theodorou]{williams2016aggressive}
G.~Williams, P.~Drews, B.~Goldfain, J.~M. Rehg, and E.~A. Theodorou.
\newblock Aggressive driving with model predictive path integral control.
\newblock In \emph{2016 IEEE International Conference on Robotics and
  Automation (ICRA)}, pages 1433--1440. IEEE, 2016.

\bibitem[Mayne(1966)]{mayne1966DDP}
D.~Mayne.
\newblock A second-order gradient method for determining optimal trajectories
  of non-linear discrete-time systems.
\newblock \emph{International Journal of Control}, 3\penalty0 (1):\penalty0
  85--95, 1966.

\bibitem[Pong* et~al.(2018)Pong*, Gu*, Dalal, and Levine]{pong2018TDM}
V.~Pong*, S.~Gu*, M.~Dalal, and S.~Levine.
\newblock Temporal difference models: Model-free deep rl for model-based
  control.
\newblock In \emph{6th International Conference on Learning Representations
  (ICLR)}, May 2018.
\newblock URL \url{https://openreview.net/forum?id=Skw0n-W0Z&noteId=Skw0n-W0Z}.
\newblock *equal contribution.

\bibitem[Schaul et~al.(2015)Schaul, Horgan, Gregor, and Silver]{schaul2015UVF}
T.~Schaul, D.~Horgan, K.~Gregor, and D.~Silver.
\newblock Universal value function approximators.
\newblock In \emph{Proceedings of the 32Nd International Conference on
  International Conference on Machine Learning - Volume 37}, ICML'15, pages
  1312--1320. JMLR.org, 2015.
\newblock URL \url{http://dl.acm.org/citation.cfm?id=3045118.3045258}.

\bibitem[Farshidian et~al.(2017)Farshidian, Neunert, Winkler, Rey, and
  Buchli]{Farshidian2017SLQ}
F.~Farshidian, M.~Neunert, A.~W. Winkler, G.~Rey, and J.~Buchli.
\newblock An efficient optimal planning and control framework for quadrupedal
  locomotion.
\newblock In \emph{ICRA}, pages 93--100. IEEE, 2017.

\bibitem[Guez et~al.(2018)Guez, Weber, Antonoglou, Simonyan, Vinyals, Wierstra,
  Munos, and Silver]{guez2018learning}
A.~Guez, T.~Weber, I.~Antonoglou, K.~Simonyan, O.~Vinyals, D.~Wierstra,
  R.~Munos, and D.~Silver.
\newblock Learning to search with {MCTS}nets, 2018.
\newblock URL \url{http://arxiv.org/pdf/1802.04697v2}.

\bibitem[Ross et~al.(2010)Ross, J.~Gordon, and Andrew~Bagnell]{ross2010Dagger}
S.~Ross, G.~J.~Gordon, and J.~Andrew~Bagnell.
\newblock A reduction of imitation learning and structured prediction to
  no-regret online learning.
\newblock \emph{Journal of Machine Learning Research - Proceedings Track}, 15,
  11 2010.

\bibitem[Abbeel and Ng(2004)]{abbeel2004IRL}
P.~Abbeel and A.~Y. Ng.
\newblock Apprenticeship learning via inverse reinforcement learning.
\newblock In \emph{Proceedings of the Twenty-first International Conference on
  Machine Learning}, ICML '04, pages 1--, New York, NY, USA, 2004. ACM.
\newblock ISBN 1-58113-838-5.
\newblock \doi{10.1145/1015330.1015430}.
\newblock URL \url{http://doi.acm.org/10.1145/1015330.1015430}.

\bibitem[Dvijotham and Todorov(2010)]{dvijotham2010IOC}
K.~Dvijotham and E.~Todorov.
\newblock Inverse optimal control with linearly-solvable mdps.
\newblock In \emph{Proceedings of the 27th International Conference on
  International Conference on Machine Learning}, ICML'10, pages 335--342, USA,
  2010. Omnipress.
\newblock ISBN 978-1-60558-907-7.
\newblock URL \url{http://dl.acm.org/citation.cfm?id=3104322.3104366}.

\bibitem[Levine and Koltun(2013)]{levine2013GPS}
S.~Levine and V.~Koltun.
\newblock Guided policy search.
\newblock In \emph{Proceedings of the 30th International Conference on
  International Conference on Machine Learning - Volume 28}, ICML'13, pages
  III--1--III--9. JMLR.org, 2013.
\newblock URL \url{http://dl.acm.org/citation.cfm?id=3042817.3042937}.

\bibitem[Minniti et~al.(2019)Minniti, Farshidian, Grandia, and
  Hutter]{Minniti2019WholeBodyMF}
M.~V. Minniti, F.~Farshidian, R.~Grandia, and M.~Hutter.
\newblock Whole-body mpc for a dynamically stable mobile manipulator.
\newblock \emph{ArXiv}, abs/1902.10415, 2019.

\end{thebibliography}

\clearpage
\appendix

\section{Proof of Theorems}
\label{app:proof}

\subsection{Girsanov Theorem}

For the stochastic process defined in Equation~\eqref{eq:stochastic_process}, it is possible to relate the state trajectory distribution $p$ of two policies $\pi_1$ and $\pi_2$ via the Girsanov theorem. Here, we briefly outline the steps of an informal derivation by discretizing the state trajectory with infinitesimal time $dt$. The distribution of $\vx(t+dt)$ conditioned on $\vx(t)$ is a Gaussian distribution with mean ${\vmu_{\pi}(t) = \vx(t) + \vf(t, \vx(t)) dt + \vg(t, \vx(t)) \vu_\pi(t) dt}$ and variance ${\vXi(t) = \vg(t,\vx) \vR^{-1} \, \vg(t,\vx)^\top}$. As a result, for the state trajectory distribution under policy $\pi_1$ we have
\begin{align*}
    p^{\pi_1}(\rho) 
    =& \lim_{dt \to 0} \prod^{T-dt}_{s=0} \cN(\vmu_{\pi_1}(s), \mathbf{\Xi}(s)dt) 
    \\
   \propto& 
    \lim_{dt \to 0} \prod^{T-dt}_{s=0} 
        \exp \left( -\frac 12
            \left\lVert \vx(s+dt) - \vmu_{\pi_1}(s) \right\rVert_{\vXi^{-1}}^2 dt^{-1}
        \right)
    \\
   \propto& 
    \lim_{dt \to 0} \prod^{T-dt}_{s=0} 
        \exp \left( -\frac 12
            \left\lVert (\vx(s+dt) - \vmu_{\pi_2}(s)) - (\vmu_{\pi_1}(s) - \vmu_{\pi_2}(s)) \right\rVert_{\vXi^{-1}}^2 dt^{-1}
        \right).
\end{align*}
After further simplifications,
\begin{align*}
    p^{\pi_1}(\rho)
    =& \lim_{dt \to 0} \prod^{T-dt}_{s=0} 
        \cN(\vmu_{\pi_2}(s), \vXi(s)dt) 
        \exp \left( 
            - \frac12 \left\lVert \vmu_{\pi_1}(s) - \vmu_{\pi_2}(s) \right\rVert_{\vXi^{-1}}^2 dt^{-1}
        \right)
        \\
        & ~~~~~~~~~~~~~~~~~~~~~~~
        \exp \Big( 
            (\vx(s+dt) - \vmu_{\pi_2}(s))^\top \vXi(s)^{-1} (\vmu_{\pi_1}(s) - \vmu_{\pi_2}(s))  dt^{-1} 
        \Big)
    \\
    =& 
        p^{\pi_2}(\tau) 
        \exp \Big( 
            - \frac12 \int_{0}^{T} \left\lVert \vu_1(t) - \vu_2(t) \right\rVert_{\vR}^2 dt
        \Big) \,
        \exp \Big( 
            \int_{0}^{T} (\vu_1(t) - \vu_2(t))^\top \vR \, d\cB(t)) 
        \Big).
\end{align*}
Thus we have
\begin{align}
    \frac{p^{\pi_1}(\rho)}{p^{\pi_2}(\tau) } 
    =& 
        \exp \Big( 
            - \frac12 \int_{0}^{T} \left\lVert \vu_1(t) - \vu_2(t) \right\rVert_{\vR}^2 dt
        \Big) \,
        \exp \Big( 
            \int_{0}^{T} (\vu_1(t) - \vu_2(t))^\top \vR \, d\cB(t)) 
        \Big)
        \label{eq:Girsanov_theorem}
\end{align}
Using the Girsanov theorem, it can be readily shown that the KL divergence between $p^{\pi_1}$ and $p^{\pi_2}$ takes the following form
\begin{align}
    \KL(p^{\pi_2} \Vert p^{\pi_1}) 
    =& 
    -\E_{p^{\pi_2}} \left[ \log \left( \frac{p^{\pi_1}(\rho)}{p^{\pi_2}(\rho)} \right) \right]
    =
    \E_{p^{\pi_2}} \left[ \int_{0}^{T}
            \frac12 \left\lVert \vu_1(t) - \vu_2(t) \right\rVert_{\vR}^2 dt \right]
    \label{eq:Girsanov_kl}
\end{align}

\subsection{Proof of Theorem~\ref{thm:kl_upper_bound}}
\label{app:proof:kl_upper_bound}

\begin{proof}

Based on the Girsanov theorem and Equation~\eqref{eq:Girsanov_kl}, the KL-divergence between the optimal and the current state trajectory distribution starting from an initial time $t$ and an initial state $\vx(t) = \vs$, can be written as 
\begin{align*}
    \KL(p^* \Vert p^{\pi}) 
    =& 
    \E_{p^*} \left[ \int_{t}^{T}
            \frac12 \left\lVert \vu^*(\tau) - \vu_{\pi}(\tau) \right\rVert_{\vR}^2 d\tau \right]
    =
    \E_{p^{\pi}} \left[ 
        \frac{p^*}{p^{\pi}}
        \int_{t}^{T} \frac 12 \lVert \vu^*(\tau) - \vu_\pi(\tau) \rVert_{\vR}^2 d\tau
    \right]
\end{align*}
Equation~\eqref{eq:optimal_distribution} defines the relationship in between the state trajectory probability distribution of the optimal policy, $\pi^*$, and the sampling policy, $\pi$ as  
\begin{align*}
    \KL(p^* \Vert p^\pi) 
    =&
    \E_{p^\pi} \left[ 
        \frac{e^{- \frac{1}{\lambda} C^{\pi}(t,\vs)}}{\Psi^* (t, \vs)} 
        \int_{t}^{T} \frac 12 \lVert \vu^*(\tau) - \vu_\pi(\tau) \rVert_{\vR}^2 d\tau 
    \right]
    \notag \\
    =&
    \KL(p^\pi \Vert p^*)
    +
    \E_{p^\pi} \left[ 
        \frac{e^{- \frac{1}{\lambda} C^{\pi}(t,\vs)} - \Psi^* (t, \vs)}{\Psi^* (t, \vs)} 
        \int_{t}^{T} \frac 12 \lVert \vu^*(\tau) - \vu_\pi(\tau) \rVert_{\vR}^2 d\tau
    \right]
    \notag \\
    =&
    \KL(p^\pi \Vert p^*)
    +
    \Delta
\end{align*}
where we have replaced the second term by $\Delta$. The second line naturally follows by using Equation~\eqref{eq:Girsanov_kl} for ${\KL(p^\pi \Vert p^*)}$. Now we further examine the term $\Delta$.
\begin{align*}
    \Delta^2
    \leq& 
    \E_{p^\pi} \left[ 
        \Big( \frac{e^{- \frac{1}{\lambda} C^{\pi}(t,\vs)} - \Psi^* (t, \vs)}{\Psi^* (t, \vs)} \Big)^2
    \right]
    \E_{p^\pi} \left[ 
        \Big( \int_{t}^{T} \frac 12 \lVert \vu^*(\tau) - \vu_\pi(\tau) \rVert_{\vR}^2 d\tau \Big)^2
    \right]
    \notag \\
    =&
    \Var \Big[ \frac{e^{- \frac{1}{\lambda} C^{\pi}(t,\vs)}}{\Psi^* (t, \vs)} \Big]
    \E_{p^\pi} \left[ 
        \Big( \int_{t}^{T} \frac 12 \lVert \vu^*(\tau) - \vu_\pi(\tau) \rVert_{\vR}^2 d\tau \Big)^2
    \right]
    \notag \\
    \leq&
    \Var \Big[ \frac{e^{- \frac{1}{\lambda} C^{\pi}(t,\vs)}}{\Psi^* (t, \vs)} \Big]
    \left( \cE \, \E_{p^\pi} \left[ 
        \int_{t}^{T} \frac 12 \lVert \vu^*(\tau) - \vu_\pi(\tau) \rVert_{\vR}^2 d\tau
        \right]
    \right)
\end{align*}
In the first line, we used the Cauchy-Schwarz inequality. Then, in the second line we have used the definition of the desirability function in Equation~\eqref{eq:desirability}.
Finally, in the third line, we used the Bhatia-Davis inequality which provides an upper bound on the variance of a bounded probability distribution. For the above we have
\begin{align*}
    0 \leq \int_{t}^{T} \frac 12 \lVert \vu^*(\tau) - \vu^\pi(\tau) \rVert_{\vR}^2 d\tau \leq \cE
\end{align*}
Thus, based on the Bhatia-Davis inequality we can write
\begin{align*}
    \E_{p^\pi} \left[ 
        \Big( \int_{t}^{T} \frac 12 \lVert \vu^*(\tau) - \vu_\pi(\tau) \rVert_{\vR}^2 d\tau \Big)^2
    \right] 
    - 
    \Big( \KL(p^\pi \Vert p^*) \Big)^2
    \leq&
    \Big( \cE - \KL(p^\pi \Vert p^*) \Big) \KL(p^\pi \Vert p^*)
    \\
    \E_{p^\pi} \left[ 
        \Big( \int_{t}^{T} \frac 12 \lVert \vu^*(\tau) - \vu_\pi(\tau) \rVert_{\vR}^2 d\tau \Big)^2
    \right] 
    \leq&
    \cE \KL(p^\pi \Vert p^*)
\end{align*}
Thus, we will have the following upper bound on $\Delta$
\begin{align*}
    \Delta
    \leq
    \lvert \Delta \rvert
    \leq
    \Big(
    \cE \,
    \KL(p^\pi \Vert p^*) \,
    \Var \Big[ \frac{e^{- \frac{1}{\lambda} C^{\pi}(t,\vs)}}{\Psi^* (t, \vs)} \Big] 
    \Big)^{\frac 12}
\end{align*}
\end{proof}

\subsection{Proof of Theorem~\ref{thm:mpc_upper_bound}}
\label{app:proof:mpc_upper_bound}

\begin{proof}

By using the relationship between the state trajectory probability distribution of the optimal policy, $\pi^*$, and the sampling policy, $\pi$, defined in Equation~\eqref{eq:optimal_distribution}, we get
\begin{align}
    \KL(p^{\pi} \Vert p^*) 
    =& 
    - \E_{p^{\pi}} \left[ 
        \log \left( 
        \frac{p^*}{p^{\pi}}
        \right) 
    \right]
    \notag \\
    =& 
    - \E_{p^\pi} \left[ 
        \log \left( 
        \frac{e^{- \frac{1}{\lambda} C^{\pi}(t,\vs)}}{\Psi^* (t, \vs)}
        \right)
    \right]
    \notag \\
    =&
    \frac{1}{\lambda} \E_{p^{\pi}} \left[
        C^{\pi}(t,\vs)
        + \lambda \log \Psi^* (t, \vs)
    \right]
    \notag \\
    =&
    \frac{1}{\lambda} \E_{p^{\pi}} \left[ C(t,\vs) \right]
    - 
    \frac{1}{\lambda} \E_{p^{*}} \left[ C(t,\vs) \right]
\end{align}
where we used Equation~\eqref{eq:optimal_value_solution} and then ${\lambda \log \Psi^* (t, \vs) = - V^{*}(t,\vs) = -\E_{p^{*}} \left[ C(t,\vs) \right]}$.

The rest of this proof follows similar to the result presented in~\cite{lowrey2018plan}. The difference is that our formulation is continuous-time while the formulation in~\cite{lowrey2018plan} is discrete-time. For the sake of brevity, we will use ${ l(\tau,\vx,\vu) := \gamma^{\tau-t} q(\tau,\vx) + \frac 12 \vu^\top \vR \, \vu }$ during this proof. 
\begin{align}
    \E_{p^{*}} \left[ C(t, \vs) \right]
    &= 
    \E_{p^{*}} \left[ 
        \gamma^{T-t} q_f(\vx(T)) + \int_{t}^{T} l(\tau,\vx,\vu) d\tau
    \right]
    \notag \\
    &= 
    \E_{p^{*}} \left[ 
        \gamma^{H} V^*(t_f, \vx_f) + \int_{t}^{t_f} l(\tau,\vx,\vu) d\tau
    \right]
\end{align}
where $t_f=t+H$ and $\vx_f:=\vx(t+H)$. We here truncate the horizon of optimization to the time horizon of MPC. To compensate for the truncated cost, we use the optimal value function as the termination cost.

When the actor uses an MPC strategy, it only has access to the approximated value function.
\begin{align}
    \E_{p^{\pi}} \left[ C(t, \vs) \right]
    &= 
    \E_{p^{\pi}} \left[ 
        \gamma^{H} \hat{V}(t_f, \vx_f) + \int_{t}^{t_f} l(\tau,\vx,\vu) d\tau
    \right]
\end{align}
We can then write the KL divergence as
\begin{align}
    \KL(p^{\pi} \Vert p^*) 
    =&
    \frac{1}{\lambda} 
    \E_{p^{\pi}} \!\left[ 
    \gamma^{H} \hat{V}(t_f, \vx_f) + \! \int_{t}^{t_f} \!\!\! l(\tau,\vx,\vu) d\tau
    \right]
    - \frac{1}{\lambda}
    \E_{p^{*}} \!\left[ 
    \gamma^{H} V^*(t_f, \vx_f) + \! \int_{t}^{t_f} \!\!\! l(\tau,\vx,\vu) d\tau
    \right]
    \notag
\end{align}
Adding and subtracting 
$
\frac{1}{\lambda} 
\E_{p^{\pi}} \left[ 
\gamma^{H} V^*(t_f, \vx_f) + \int_{t}^{t_f} l(\tau,\vx) d\tau
\right]
$ 
\begin{align}
    \KL(p^{\pi} \Vert p^*) 
    =&
    \frac{\gamma^{H}}{\lambda} \E_{p^{\pi}} \left[ 
    \hat{V}(t_f, \vx_f)
    - V^*(t_f, \vx_f)
    \right]
    +
    \frac{1}{\lambda} 
    \E_{p^{\pi}} \left[ 
    \gamma^{H} V^*(t_f, \vx_f) + \int_{t}^{t_f} l(\tau,\vx,\vu) d\tau
    \right]
    \notag \\
    \phantom{=}&
    - \frac{1}{\lambda}
    \E_{p^{*}} \left[ 
    \gamma^{H} V^*(t_f, \vx_f) + \int_{t}^{t_f} l(\tau,\vx,\vu) d\tau
    \right]
    \label{eq:kl_reverse_incomplete}
\end{align}
Using our assumption ${\max_{t,s} \vert \hat V(t,\vs) - V^*(t,\vs) \vert} = \cL$,
\begin{align*}
        \E_{p^{\pi}} \left[ 
        \gamma^{H} V^*(t_f, \vx_f) + \int_{t}^{t_f} l(\tau,\vx,\vu) d\tau
        \right]
    &\leq 
        \E_{p^{\pi}} \left[ 
        \gamma^{H} \hat{V}(t_f, \vx_f) + \int_{t}^{t_f} l(\tau,\vx,\vu) d\tau
        \right]
        + \gamma^{H} \cL
    \notag \\
        \E_{p^{*}} \left[ 
        \gamma^{H} V^*(t_f, \vx_f) + \int_{t}^{t_f} l(\tau,\vx,\vu) d\tau
        \right]
    &\geq 
        \E_{p^{*}} \left[ 
        \gamma^{H} \hat{V}(t_f, \vx_f) + \int_{t}^{t_f} l(\tau,\vx,\vu) d\tau
        \right]
        - \gamma^{H} \cL
        \notag \\
\end{align*}
using these bounds in Equation~\eqref{eq:kl_reverse_incomplete}, we get
\begin{align}
    \KL(p^{\pi} \Vert p^*) 
    \leq&
    \frac{\gamma^{H}}{\lambda} \E_{p^{\pi}} \left[ 
    \hat{V}(t_f, \vx_f)
    - V^*(t_f, \vx_f)
    \right]
    +
    \frac{1}{\lambda} 
    \E_{p^{\pi}} \left[ 
    \gamma^{H} \hat{V}(t_f, \vx_f) + \int_{t}^{t_f} l(\tau,\vx,\vu) d\tau
    \right]
    \notag \\
    \phantom{=}&
    - \frac{1}{\lambda}
    \E_{p^{*}} \left[ 
    \gamma^{H} \hat{V}(t_f, \vx_f) + \int_{t}^{t_f} l(\tau,\vx,\vu) d\tau
    \right]
    +
    2 \frac{\gamma^{H} \cL}{\lambda}
\end{align}
Since $p_{\pi}$ is generated to minimize ${ \E_{p^{\pi}} \left[  \gamma^{H} \hat{V}(t_f, \vx_f) + \int_{t}^{t_f} l(\tau,\vx,\vu) d\tau \right] }$, therefore 
\begin{align*}
    \E_{p^{\pi}} \left[  \gamma^{H} \hat{V}(t_f, \vx_f) + \int_{t}^{t_f} l(\tau,\vx,\vu) d\tau \right]
    \leq
    \E_{p^{*}} \left[  \gamma^{H} \hat{V}(t_f, \vx_f) + \int_{t}^{t_f} l(\tau,\vx,\vu) d\tau \right]
\end{align*}
We then have
\begin{align}
    \KL(p^{\pi} \Vert p^*) 
    \leq&
    \frac{\gamma^{H}}{\lambda} \E_{p^{\pi}} \left[ 
    \hat{V}(t_f, \vx_f)
    - V^*(t_f, \vx_f)
    \right]
    +
    2 \frac{\gamma^{H} \cL}{\lambda}
\notag \\
    \leq&
    \gamma^{H} \left[ 
    \KL(p^{\pi}_f \Vert p^*_f)
    \right]
    +
    2 \frac{\gamma^{H} \cL}{\lambda}
\end{align}
Recursively applying the KL bound to $\KL(p^{\pi}_f \Vert p^*_f)$, we get
\begin{align}
    \KL(p^{\pi} \Vert p^*)
    \leq&
    \frac{2 \cL \gamma^{H}}{\lambda} \left(
    1 + \gamma^{H} + \gamma^{2H} + \cdots
    \right)
    \notag \\
    \leq&
    \frac{2 \cL \gamma^{H}}{\lambda (1-\gamma^{H})} 
    \notag
\end{align}%
\end{proof}

\subsection{Proof of Proposition~\ref{thm:running_cost}}
\label{app:proof:running_cost}

First, we note that HJB equation for the problem defined in Equations \eqref{eq:stochastic_process}-\eqref{eq:path_cost} with $\vSigma \, \vR = \lambda \vI$ has the form
\begin{align}
\label{eq:stochastic_hjb}
    - \partial_t V^*(t,\vx) =
    & 
        l(t, \vx) + \partial_{\vx} V^*(t, \vx)^\top \vf(t, \vx) 
        - \frac{1}{2} \partial_{\vx} V^*(t, \vx)^\top \mathbf{\Xi}(t, \vx) \, \partial_{\vx} V^*(t, \vx)
    \notag \\
    &
        + \frac{\lambda}{2} \mathtt{Tr} [\partial_{\vx}^2 V^*(t, \vx) \, \mathbf{\Xi}(t, \vx)]
        ,
\end{align}
with $V^*(T,\vx) = \phi(\vx(T))$ and $\mathbf{\Xi}(t, \vx) = \vg(t, \vx) \vR^{-1} \, \vg(t, \vx)^\top = \lambda \, \vg(t, \vx) \vSigma \, \vg(t, \vx)^\top$. For sake of brevity, we define ${l(t, \vx):=\gamma^{t-t_0} q(t, \vx)}$ and ${\phi(\vx):=\gamma^{T-t_0} q_f(\vx)}$.

The optimal control policy can be derived as
\begin{align}
\label{eq:lmdp_policy}
    \pi^*(t, \vx) = -\vR^{-1} \vg(t, \vx)^\top \partial_{\vx} V^*(t, \vx).
\end{align}

\begin{lemma} \label{thm:deterministic_equivalence}

The optimal control policy of the stochastic problem in equations  \eqref{eq:stochastic_process}-\eqref{eq:path_cost} is the optimal solution to a deterministic problem with following cost functional
\begin{align}
\label{eq:deterministic_cost}
    C^{\pi}_d(t_0, \vs_0) 
    =& 
    \gamma^{T-t_0} q_f(\vs(T)) 
    + \int_{t_0}^T 
    \Big(
        \gamma^{t-t_0} q(t, \vs) + \frac{\lambda}{2} \Tr [\mathbf{\Xi}(t, \vs) \partial_s^2 V(t, \vs)]
        + \frac 12 \vu^\top \vR \, \vu   
    \Big) dt 
\end{align}
Where the state evolution is based on the mean of the system dynamics defined in \eqref{eq:stochastic_process}, i.e.,
\begin{align}
\label{eq:deterministic_process}
    & \dot \vs = \vf(t, \vs) + \vg(t,\vs) \vu, \quad \vs(t_0) = \vs_0
    \\
    & \vu = \pi(t, \vs).
\end{align}
\end{lemma}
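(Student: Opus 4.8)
The plan is to show that the stochastic control problem \eqref{eq:stochastic_process}--\eqref{eq:path_cost} and the deterministic problem with cost \eqref{eq:deterministic_cost} and dynamics \eqref{eq:deterministic_process} share the same HJB equation, hence the same optimal value function and the same optimal feedback law \eqref{eq:lmdp_policy}. First I would write down the HJB equation for the \emph{deterministic} problem. Since \eqref{eq:deterministic_process} is deterministic, the dynamic programming equation for the value function $V_d^*$ has no second-order (trace) term coming from the dynamics; it reads
\begin{align*}
    -\partial_t V_d^*(t,\vx)
    = \min_{\vu} \Big\{
        \gamma^{t-t_0} q(t,\vx)
        + \tfrac{\lambda}{2}\Tr[\mathbf{\Xi}(t,\vx)\,\partial_{\vx}^2 V^*(t,\vx)]
        + \tfrac12 \vu^\top \vR\,\vu
        + \partial_{\vx} V_d^*(t,\vx)^\top\big(\vf(t,\vx) + \vg(t,\vx)\vu\big)
    \Big\},
\end{align*}
with terminal condition $V_d^*(T,\vx) = \gamma^{T-t_0} q_f(\vx)$. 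The key observation is that the trace term here involves the \emph{fixed} function $V^*$ (the stochastic value function), so it is just an extra known running cost and does not participate in the minimization over $\vu$.

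Next I would carry out the minimization over $\vu$ explicitly: the minimand is quadratic in $\vu$, so the minimizer is $\vu = -\vR^{-1}\vg(t,\vx)^\top \partial_{\vx} V_d^*(t,\vx)$, and substituting back turns $\tfrac12\vu^\top\vR\vu + \partial_{\vx}V_d^{*\top}\vg\vu$ into $-\tfrac12 \partial_{\vx}V_d^{*\top}\mathbf{\Xi}\,\partial_{\vx}V_d^*$. Collecting terms, the deterministic HJB becomes
\begin{align*}
    -\partial_t V_d^*(t,\vx)
    = \gamma^{t-t_0} q(t,\vx)
        + \partial_{\vx} V_d^*(t,\vx)^\top \vf(t,\vx)
        - \tfrac12 \partial_{\vx} V_d^*(t,\vx)^\top \mathbf{\Xi}(t,\vx)\,\partial_{\vx} V_d^*(t,\vx)
        + \tfrac{\lambda}{2}\Tr[\mathbf{\Xi}(t,\vx)\,\partial_{\vx}^2 V^*(t,\vx)].
\end{align*}
I would then compare this with the stochastic HJB \eqref{eq:stochastic_hjb}, where $l(t,\vx) = \gamma^{t-t_0} q(t,\vx)$. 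Both equations carry the same terminal condition, the same first-order term, and the same quadratic term; the stochastic HJB has $+\tfrac{\lambda}{2}\Tr[\partial_{\vx}^2 V^*\,\mathbf{\Xi}]$, which is exactly the extra running-cost term I inserted in \eqref{eq:deterministic_cost}. Therefore $V^*$ itself satisfies the deterministic HJB: plugging $V_d^* = V^*$ makes the trace terms on the two sides coincide, and the remaining terms match \eqref{eq:stochastic_hjb} by construction. By uniqueness of the (viscosity/classical) solution of the deterministic HJB equation, $V_d^* = V^*$, and consequently the optimal feedback of the deterministic problem is $-\vR^{-1}\vg^\top\partial_{\vx}V^* = \pi^*(t,\vx)$, the same as \eqref{eq:lmdp_policy}.

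The main obstacle — and the place I would be most careful — is the treatment of the trace term: it must be held \emph{fixed} (as a function of the already-known stochastic value function $V^*$, not of the unknown $V_d^*$) for the argument to close, since otherwise the deterministic HJB would pick up a second-order term in $V_d^*$ and no longer match \eqref{eq:stochastic_hjb}. This is exactly why the lemma states the deterministic cost \eqref{eq:deterministic_cost} with $V$ (meaning $V^*$) appearing explicitly inside it. A secondary technical point is justifying uniqueness/verification for the deterministic HJB so that "satisfies the HJB with the right terminal condition" implies "is the optimal value function"; under the paper's implicit smoothness assumptions (the value function is differentiated throughout), a standard verification-theorem argument suffices. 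One should also note that the condition $\vSigma\vR = \lambda\vI$ is used only to write $\mathbf{\Xi} = \lambda\,\vg\vSigma\vg^\top$, matching the form of the noise covariance in \eqref{eq:stochastic_process}, and is otherwise not essential to the equivalence itself.
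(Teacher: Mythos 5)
Your proposal is correct and follows essentially the same route the paper intends: its one-line proof ("follows using the definition of the HJB equation for the stochastic and deterministic optimal control problems") is exactly the comparison you carry out, with the trace term treated as a fixed state-dependent running cost built from $V^*$ so that $V^*$ solves the deterministic HJB with matching terminal condition and yields the same feedback law \eqref{eq:lmdp_policy}. Your explicit attention to holding the trace term fixed and to the verification/uniqueness step simply fills in details the paper leaves implicit.
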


\begin{proof}
The proof easily follows using the definition of the HJB equation for the stochastic and deterministic optimal control problems. 
\end{proof}

We finally provide the proof of Proposition~\ref{thm:running_cost}.

\begin{proof}
Start from the Girsanov theorem and Equation~\eqref{eq:Girsanov_kl}, we get
\begin{align}
    \KL(p^\pi \Vert p^*) 
    =& 
    \E_{p^\pi} \left[ 
        \int_{t}^{T} \frac 12 \lVert \vu_\pi(\tau) - \vu^*(\tau) \rVert_{\vR}^2 d\tau 
    \right]
    \notag \\
    =&
    \E_{p^\pi} \Bigg[ 
        \int_{t}^{T} \Big(
        \frac 12 \lVert \vu_\pi(\tau) \rVert_{\vR}^2 
        + \partial_{\vx} V^*(s,\vx)^\top \vg(\tau,\vx) \vu_\pi(\tau))
        \notag \\
        & ~~~~~~~~~~~~~~~~~~~
        + \frac 12 \partial_{\vx} V^*(\tau,\vx)^\top \mathbf{\Xi}(\tau, \vx) \, \partial_{\vx} V^*(\tau,\vx))
        \Big) d\tau 
    \Bigg] \label{eq:1}
     \\
    =&
    \E_{p^\pi} \Bigg[ 
        V^*(T,\vx(T)) 
        +
        \int_{t}^{T} \Big(
        \frac 12 \lVert \vu_\pi(\tau) \rVert_{\vR}^2 
        - \partial_t V^*(\tau,\vx)
        \notag \\
        & ~~~~~~~~~~~
        - \partial_{\vx} V^*(\tau,\vx)^\top \vf(\tau,\vx) 
        - \frac 12 \Tr \big[ \mathbf{\Xi}(\tau, \vx)) \partial_{\vx}^2 V^*(\tau,\vx)) \big]
        \notag \\
        & ~~~~~~~~~~~
        + \frac 12 \partial_{\vx} V^*(\tau,\vx)^\top \mathbf{\Xi}(\tau, \vx) \, \partial_{\vx} V^*(\tau,\vx)
        \Big) d\tau 
    \Bigg]
    - V^*(t,\vx) \label{eq:2}
     \\
    =&
    \E_{p^\pi} \Bigg[ 
        V^*(T,\vx(T)) 
        +
        \int_{t}^{T} \Big(
        L_e(\tau,\vx)
        + \frac 12 \lVert \vu_\pi(\tau) \rVert_{\vR}^2 
        \notag \\
        & ~~~~~~~~~~~
        - \frac 12 \Tr \big[ \mathbf{\Xi}(\tau, \vx) \partial_{\vx}^2 V^*(\tau,\vx)) \big]
        \Big) d\tau 
    \Bigg]
    - V^*(t,\vx) 
\end{align}
In Equation~\eqref{eq:1} we have replaced $\vu^*$ using Equation~\eqref{eq:lmdp_policy}; in Equation~\eqref{eq:2} we have used It\^o's Lemma for the process $\cV^*(t) = V^*(t, \vx)$
\begin{align}
\label{eq:ito}
    d \cV^*(t) 
    =& \partial_t V^*(t,\vx) dt
    + \partial_x V^*(t,\vx)^\top 
        \big( 
        \vf(t,\vx) + \vg(t,\vx) \vu_\pi(t, \vx) 
        \big) dt
    \notag \\
    & ~~ 
    + \frac{\lambda}{2} \Tr \big[ \mathbf{\Xi}(t, \vx) \partial_{\vx}^2 V^*(t,\vx) \big] dt
    + \partial_{\vx} V^*(t,\vx)^\top \vg(t,\vx) d\cB(t).
\end{align}
After reordering the terms in Equation~\eqref{eq:ito}, taking the time integral over the interval $[t, T]$, and taking the expectation with respect to $p^\pi$, we get
\begin{align}
    &\E_{p^\pi} \Bigg[
    \int_{t}^{T}
    \partial_{\vx} V^*(\tau,\vx)^\top \vg(\tau,\vx) \vu_\pi(\tau) d\tau
    \Bigg]
    = 
    \E_{p^\pi} \Bigg[ 
    V^*(T,\vx) 
    - \int_{t}^{T} \Big(
    \partial_t V^*(\tau,\vx)
    \notag \\
    & ~~~~~~~
    + \partial_{\vx} V^*(\tau,\vx)^\top \vf(t,\vx) 
    + \frac{\lambda}{2} \Tr \big[ \mathbf{\Xi}(\tau, \vx) \partial_{\vx}^2 V^*(\tau,\vx) \big] 
    \Big) d\tau
    \Bigg]
    - V^*(t,\vx) 
\end{align}
where the term involving $d\cB(t)$ cancels out. \\
As a result, the policy $\vpi$ which minimizes the reverse KL-divergence can be derived as 
\begin{align}
    \pi^*(t, \vs)
    =& 
    \arg\min_{\pi} 
    \E_{p^\pi} \Bigg\{ 
        V^*(T,\vx(T)) 
        +
        \int_{t}^{T} \Big(
        l_d(\tau,\vx)
        + \frac{1}{2}  \vu_\pi^\top \vR\,  \vu_\pi
        \notag \\
        & ~~~~~~~~~~~~~~~~~~~~~
        - \frac{\lambda}{2} \Tr \big[ \mathbf{\Xi}(\tau, \vx) \partial_{\vx}^2 V^*(\tau,\vx) \big]
        \Big) d\tau 
    \Bigg\}
\end{align}
This expectation is based on the probability distribution generated by the stochastic system in \eqref{eq:stochastic_process}. According to Lemma~\ref{thm:deterministic_equivalence}, we can transform this optimization problem to an equivalent deterministic problem in Equation~\eqref{eq:equivalent_deterministic}.%
\end{proof}

\section{Experimental Details}
\label{app:experiment_details}

\paragraph{Platform}
The Ballbot robot depicted in Figure \ref{fig:ballbot} is a 3D inverted pendulum capable of balancing on a ball with the help of three actuators. The mathematical formulation of system dynamics can be found in \cite{Minniti2019WholeBodyMF}.
Since it balances on a single ball, it has dynamic stability, is omnidirectional, and is capable of carrying out agile movements. Due to its inherent instability and highly nonlinear dynamics, this robot can also be used as a testing platform to validate general control algorithms, which may be applied to other types of mobile platforms. 

\begin{figure}      
    \centering
    \includegraphics[width=0.3\linewidth]{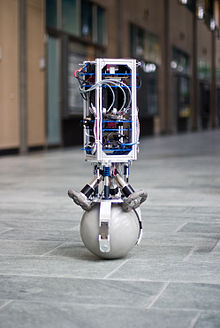}
    \caption{Image of the Ballbot used in the experiments. Ballbot is a torque-controlled, omnidirectional robot which balances on a ball through three omni-wheels.}
    \label{fig:ballbot}
\end{figure}

\paragraph{Experiments}
 An advantage of using an MPC strategy is that a nominal controller can be used to stabilize the system. In these experiments, we use an additional cost term in the trajectory optimizer that stabilizes the system in an upright position. As a result, the system will stand from the beginning resulting in much faster convergence to the desired behaviour. This is not a limitation of the pipeline, since the value function could also be trained to encode the upright stabilization.
 
In Figure \ref{fig:result_perf}, we tuned the hyper-parameters to achieve the best performance for each scenario. The performance at each learning iteration is computed by taking the average performance over 8 trajectories sampled from different starting positions.




\end{document}